\renewcommand\qedsymbol{$\blacksquare$}
\Crefname{equation}{Eq.}{Eqs.}
\Crefname{figure}{Fig.}{Figs.}
\Crefname{tabular}{Tab.}{Tabs.}
\Crefname{definition}{Def.}{Defs.}
\Crefname{section}{Sec.}{Sects.}
\Crefname{theorem}{Thm.}{Thms.}
\Crefname{condition}{Cond.}{Conds.}
\pgfplotsset{compat=1.15}
\tikzstyle{block} = [draw, rectangle, minimum height=2em, minimum width=3em,thick]
\tikzstyle{blockdot} = [block, dotted,rounded corners=4, inner sep=-2pt]
\tikzstyle{blockfill} = [block,rounded corners=4, inner sep=-2pt,fill=blue!5!white]
\tikzstyle{every node}=[font=\footnotesize]
\let\MYcaption\@makecaption
\let\@makecaption\MYcaption
\theoremstyle{definition}
\newtheorem{theorem}{Theorem}
\newtheorem{proposition}[theorem]{Proposition}
\newtheorem{claim}[theorem]{Claim}
\newtheorem{fact}[theorem]{Fact}
\newtheorem{lemma}[theorem]{Lemma}
\newtheorem{example}[theorem]{Example}
\newtheorem{remark}[theorem]{Remark}
\newtheorem{definition}[theorem]{Definition}
\newtheorem{property}[theorem]{Property}
\def\@opargbegintheorem#1#2#3{\trivlist
   \item[]{\bfseries #1\ #2\ (#3)} \itshape}
\newcommand{\alphas}{\alpha^\star}
\newacronym{abk:br}{BR}{best response}
\newacronym{abk:cps}{CPS}{Cyber-physical system}
\newacronym{abk:dag}{DAG}{Directed Acyclic Graph}
\newcommand{\defeq}{\vcentcolon=}
\newcommand{\distance}{\delta}
\newcommand{\indicator}{\mathds{1}}
\newcommand{\lati}{\Tilde{\lambda}}
\newacronym{abk:marl}{MARL}{Multi-Agent Reinforcement Learning}
\newcommand{\muti}{\Tilde{\mu}}
\newcommand{\mutis}{\muti^\star\!}
\newcommand{\neighdegree}{h} %
\newacronym{abk:poset}{poset}{partially ordered set}
\newcommand{\proofref}[1]{\ifthenelse{\boolean{submission}}{}{The proof can be found in \cref{#1}.}}
\newcommand{\players}{\mathcal{A}}
\newcommand{\poa}{PoA}
\newacronym{abk:rl}{RL}{Reinforcement Learning}
\newcommand{\resourceset}{\mathcal{R}}
\newcommand{\resource}{r}
\newcommand{\load}{l}
\newcommand{\socialcost}{C}
\DeclarePairedDelimiter\floor{\lfloor}{\rfloor}
\DeclarePairedDelimiter{\ceil}{\lceil}{\rceil}
\newcommand{\game}{\mathcal{G}}
\newcommand{\strategyset}{\Gamma}
\newcommand{\strategy}{\gamma}
\newcommand{\strategyother}{\strategy_{-i}}
\newcommand{\cgstrategy}{\strategy^{\mathrm{cg}}}
\newcommand{\rescost}{\cost}
\newcommand{\cost}{J}
\newcommand{\costper}{\cost^\mathrm{per}}
\newcommand{\costcg}{\cost^\mathrm{cg}}
\newcommand{\reals}{\mathbb{R}}
\newcommand{\naturals}{\mathbb{N}}
\definecolor{nkcolor}{RGB}{90,180,90}
\newcommand{\nk}[1]{{\color{nkcolor}NK: #1}}
\newacronym{udg}{UDG}{Urban Driving Game}
\newacronym{ibr}{IBR}{iterative best response}
\newacronym[longplural={Nash Equilibria}, shortplural={NE}]{ne}{NE}{Nash Equilibrium}
\newacronym{svo}{SVO}{Social Value Orientation}
\newacronym{smarts}{SMARTS}{Scalable Multi-Agent Reinforcement Learning Training School}
\newacronym{marl}{MARL}{Multi-Agent Reinforcement Learning}
\setlist[enumerate]{itemsep=0mm,leftmargin=6mm,topsep=0mm}
\setlist[itemize]{itemsep=0mm,leftmargin=4mm,topsep=0mm}
\begin{document}
\title{\LARGE \bf
How Bad is Selfish Driving?\\Bounding the Inefficiency of Equilibria in Urban Driving Games}
\author{Alessandro Zanardi$^{*}$, Pier Giuseppe Sessa$^{*}$, Nando K\"aslin, Saverio Bolognani, Andrea Censi, Emilio Frazzoli
\thanks{
$^{*}$Equal contribution.
}
\thanks{
This work was supported by the Swiss National Science Foundation under NCCR Automation, grant agreement 51NF40\_180545.}%
}

\maketitle
\begin{abstract}
We consider the interaction among agents engaging in a driving task and we model it as general-sum game.
This class of games exhibits a plurality of different equilibria posing the issue of equilibrium selection.
While selecting the most efficient equilibrium (in term of social cost) is often impractical from a computational standpoint, in this work we study the (in)efficiency of \emph{any} equilibrium players might agree to play.
More specifically, we bound the equilibrium inefficiency by modeling driving games as particular type of congestion games over spatio-temporal resources. We obtain novel guarantees that refine existing bounds on the Price of Anarchy (\poa{}) as a function of problem-dependent game parameters.
For instance, the relative trade-off between proximity costs and personal objectives such as comfort and progress. 
Although the obtained guarantees concern open-loop trajectories, we observe efficient equilibria even when agents employ closed-loop policies trained via decentralized multi-agent reinforcement learning. 
\end{abstract}
\IEEEpeerreviewmaketitle

\addtolength{\belowcaptionskip}{-10pt}
\setlength{\textfloatsep}{10pt plus 1.0pt minus 2.0pt}

\section{Introduction}
While autonomous vehicles begin to be deployed around the world, it became evident that they often still miss the magic touch to seamlessly integrate with other road users~\cite{Wang2022SocialPerspectives}.
This has sparked a noticeable research interest toward the interactive nature of the driving task~\cite{Albrecht2018b,Wang2022SocialPerspectives,Crosato2022Interaction-awareOrientation}. 
To this end, \emph{game-theoretical} notions have been integrated in motion planning algorithms~\cite{Di2020}, in learning policies~\cite{Sadigh,Peters2022LearningGames} and, more in general, when explicitly reasoning about others' reactive behavior~\cite{Stefansson2019}.\looseness=-1  

Arguably, the hardness of driving interactions is to coordinate on a certain equilibrium~\cite{Wang2022SocialPerspectives} -- who goes first when resources are contended.
Under mild assumptions, it has been shown that there actually exist certain equilibria that shall be preferred in terms of social efficiency~\cite{Zanardi2021} or in terms of cost sharing~\cite{LeCleach2022ALGAMES:Games}.
At the same time, big strides forward have been made for game-theoretical planners that have local guarantees of convergence~\cite{LeCleach2022ALGAMES:Games,Fridovich-Keil2019,Sadigh}.
Combining these two aspects, our work is motivated by the following question: \emph{How inefficient can an equilibrium be compared to another?}
An answer would have several implications ranging from the problem of equilibrium selection~\cite{Harsanyi1988} to the importance of global vs local, centralized vs decentralised solutions.

We consider the class of urban driving games and study their efficiency, i.e., the cost of their equilibria with respect to the social optimum.
Under minor modeling assumptions, we show that it is possible to derive analytical bounds for their inefficiency. 
The resulting bound is a function of the relative importance between personal objectives (e.g., a comfort cost that depends only on the agent's trajectory) and joint ones (e.g., a proximity cost that depends on the joint trajectories of the players). In addition, it depends on a problem-dependent parameter which represents the agent's sensitiveness to the number of nearby vehicles.
A satisfactory bound implies that the agents can be self-interested without having to estimate others' degree of cooperativeness~\cite{Schwarting2019a}; and that there is no need for global coordination since decentralized and local solutions would still achieve a satisfactory overall cost for the system.
\newlength{\mycolumn}
\newlength{\myleftcut}
\newlength{\myrightcut}
\newlength{\mybelowcut}
\newlength{\mytopcut}
\newlength{\myinflated}
\setlength{\mycolumn}{\columnwidth}

\setlength{\myleftcut}{.1cm}
\setlength{\mybelowcut}{.3cm}
\setlength{\myrightcut}{.1cm}
\setlength{\mytopcut}{.1cm}

\begin{figure}[t]
    \centering
    \begin{subfigure}{\mycolumn}
			\adjincludegraphics[width=\mycolumn,Clip={\myleftcut \mybelowcut \myrightcut \mytopcut},clip]{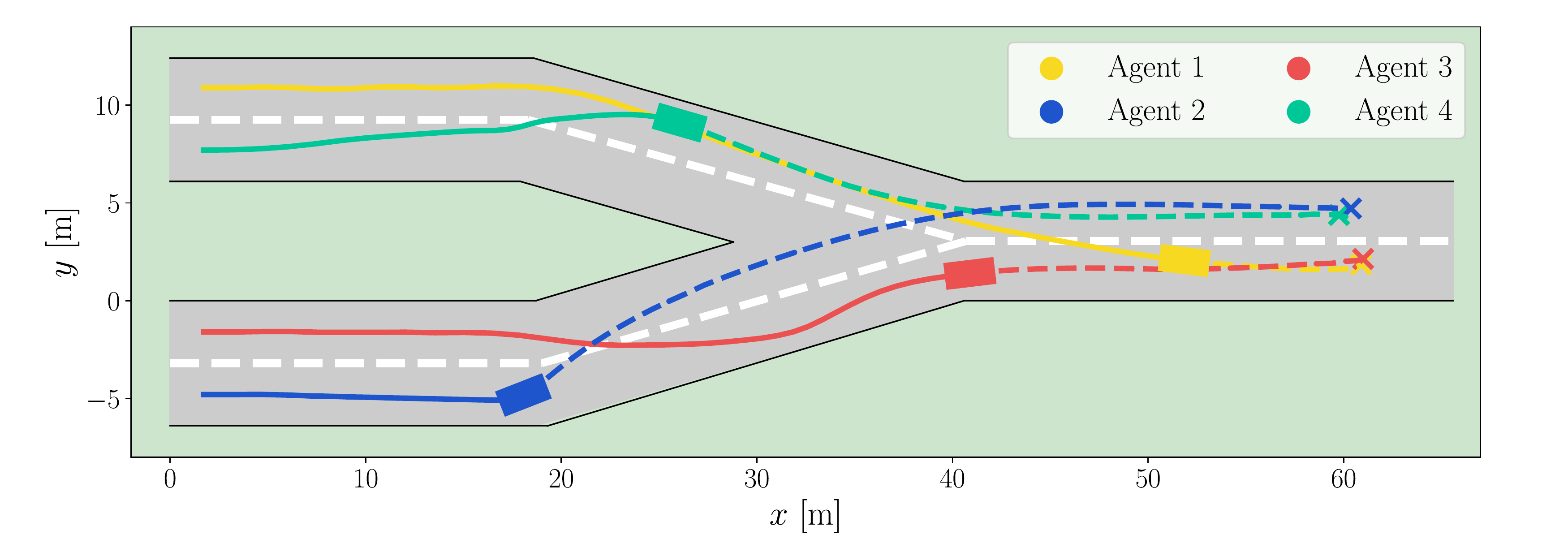}
    \caption*{}
    \label{fig:sub3}
    \end{subfigure}
    \begin{subfigure}{\mycolumn}
      \adjincludegraphics[width=\mycolumn,Clip={\myleftcut \mybelowcut \myrightcut \mytopcut},clip]{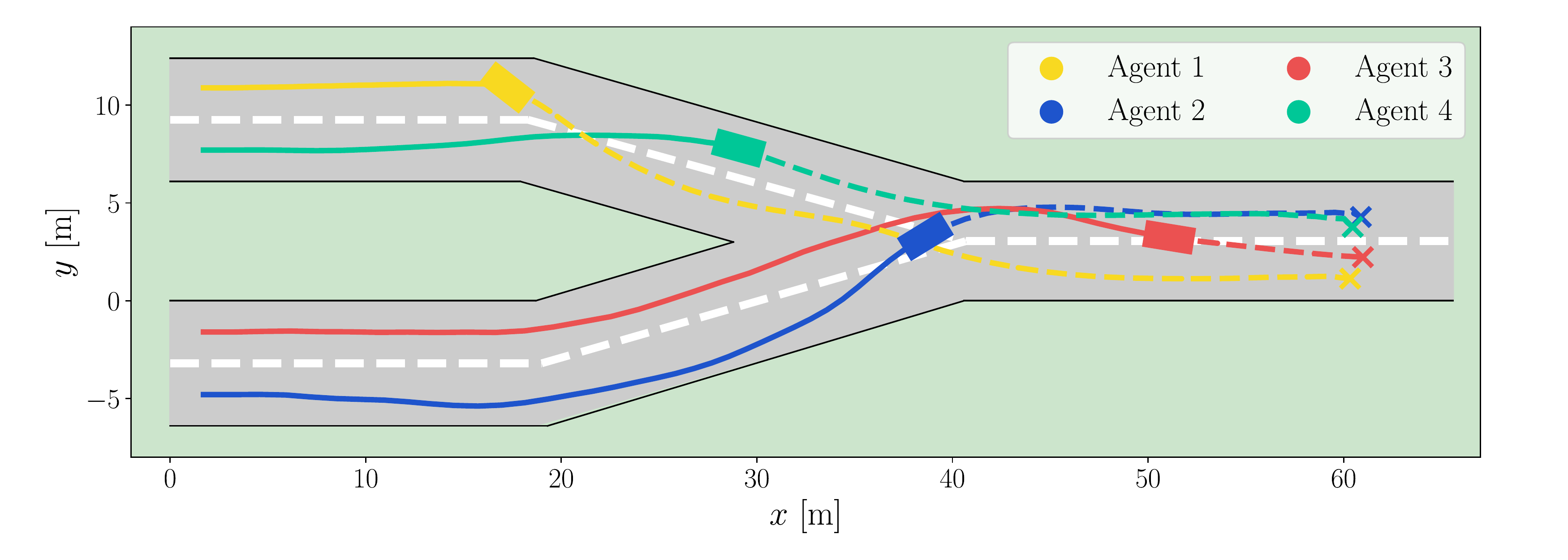}
    \caption*{}
    \label{fig:sub4}
    \end{subfigure}
\caption{For agents engaging in the driving task there exist many topologically different equilibria. The figure shows two distinct examples of learned Nash Equilibrim policies. Some would result in better overall costs for the individual agents but would require to find global solutions which are often impractical in these cases. We study and bound in terms of Price of Anarchy the inefficiency arising in these type of games.}
\label{fig:driving_game}
\end{figure}

\subsection{Related Work}
Many works in the last years modeled driving interactions as a general-sum game~\cite{Toghi2021SocialDriving, Zanardi2021}.
And most--if not all--of the devised solution methods provide guarantees only for local convergence to Nash Equilibria; examples range from iterative quadratic approximations~\cite{Fridovich-Keil2019}, augmented Lagrangian methods~\cite{LeCleach2022ALGAMES:Games} and ``Newtonesque" methods~\cite{Di2019NewtonsGames}.
Since there exists both a continuum of solutions but also qualitatively different class of solutions (in the sense of topologically different solutions~\cite[Sec. 3.3]{Wang2022SocialPerspectives}), one cannot ignore the problem of equilibrium selection. For some methods the (local) choice is embedded in the method, converging for example to Generalized Nash Equilibria~\cite{LeCleach2022ALGAMES:Games}. In other mixed context the choice is dictated by human drivers. In~\cite{Peters2020} for instance, autonomous vehicles keep a belief over the possible equilibria in a bid to favor the ones preferred by others.
Differently from these works, we study the inefficiency that \emph{any} equilibrium could have.

The study of games' inefficiency finds a considerable body of literature since the pioneering work in~\cite{Koutsoupias2009Worst-caseEquilibria}. 
Efficiency guarantees are often expressed in terms of \poa{} which quantifies the ratio between the social cost of the worst-performing equilibrium with that of the socially optimal outcome. 
\poa{} bounds have been derived for specific classes of games such as congestion games~\cite{Roughgarden2002HowRouting,Aland2011Exact}, utility games~\cite{Vetta2002NashAuctions, Sessa2019b}, and smooth games~\cite{Roughgarden2015}, exploiting various structures. 
However, to the best of our knowledge, PoA in the urban driving setting have not been studied in the literature. 
In this work, we formulate driving games as a particular type of congestion games and employ the PoA bounding techniques of~\cite{Aland2011Exact}. However, differently from~\cite{Aland2011Exact} we generalize and refine the obtained guarantees exploiting the specific driving games' cost structure which consists of joint but also personal objectives.\looseness=-1
\subsection{Contribution}
We consider the problem of bounding the inefficiency of equilibria that emerge in driving games, quantified via the notion of PoA.
To this end,
\begin{itemize}
    \item We formally show that driving games can be naturally modeled as a particular type of \emph{congestion games}, where the agents compete for spatio-temporal resources. This allows us to leverage existing PoA bounds for congestion games and apply them to our class of driving games. 
    \item We further refine the efficiency guarantees by exploiting the specific cost structure of driving games. We derive a novel and improved efficiency bound which depends on the relative importance between personal and joint costs. The obtained results can be of broader interest since they apply to general congestion games with added personal costs and, to the best of our knowledge, they constitute the first PoA bounds for driving games.\looseness=-1 
    \item We conduct an experimental case study to evaluate the inefficiency of several driving scenarios. We compute equilibrium driving policies via multi-agent reinforcement learning and utilize a systematic approach to empirically approximate the associated PoAs. Conforming with our intuition, the computed equilibria display a high efficiency in all the considered scenarios and the resulting PoAs are within the derived, albeit conservative, bounds. 
\end{itemize}

\section{Preliminaries}\label{sec:preliminaries}
\subsection{Urban Driving Games}
We consider the class of (urban) driving games akin to~\cite{Zanardi2021}.
They are a particular subclass of general-sum games with few peculiarities. 
Most importantly, the cost-structure in a driving game allows to distinguish between \emph{joint} and \emph{personal} costs. 
Joint costs depend on the state and actions of all the players, e.g., proximity. 
Personal costs instead, depend only on the states and actions of a specific player, e.g., a comfort objective penalizing large accelerations. 
Furthermore, the resulting game often enjoys the favorable structure of being a potential game~\cite{Kavuncu2021a,Liu2022PotentialDriving}, which in turn, guarantees convergence of \emph{better}-response schemes and, in some cases~\cite{Zanardi2021}, social efficiency of global minima. 

For the provided analytic results we consider \emph{open-loop} strategies, where the players commit to the whole trajectory. 
More formally, we consider a driving game defined by the tuple $\game = \langle \players, \{ \strategyset_i\}, \{J_i\} \rangle_{i\in\players}$., where $\players$ is a finite set of players, $\strategyset_i$ is a discrete set of dynamically feasible trajectories, and $J_i$ is the cost structure for a player. 
In the remaining, we denote without subindices the joint quantities, while the subindex specifies if a quantity is peculiar to a subset of the players; e.g., $-i$ reads ``all but Player $i$''. Thus, we denote the trajectory choice of Player $i$ as $\strategy_i \in \strategyset_i$, whereas $\strategy \in \prod_{i\in\players} \strategyset_i $ denotes the joint trajectories of all players.

In a driving game, the cost structure $J_i$ of each player penalizes -- as a first priority objective -- colliding trajectories. 
Second, whenever game outcomes are not colliding, it typically penalizes distances from neighbouring players (i.e., a \emph{proximity} cost) and other personal objectives (i.e., comfort, acceleration, time, etc.). This principle has been modeled with a lexicographic ordered cost in~\cite{Zanardi2021}, and with optimization constraints in other works~\cite{Dreves2018,LeCleach2022ALGAMES:Games}.

In this work, we restrict the possible coupling costs among the players to the ones involving distance.
Therefore, the overall cost for a player has the form $J_i(\strategy) = J_i^\text{prox}(\strategy) + J_i^\text{per}(\strategy_i)$ where the first term is a proximity cost--collision at the limit--and the second term is a personal cost.
We further specify the allowed proximity costs to have two properties:
\begin{enumerate}[label=(\roman*)]
    \item To be integrable over the trajectory;
    \item To be monotonically increasing as the distance decreases.
\end{enumerate}
More formally, given a pair of players' trajectories $\strategy_i$ and $\strategy_j$, we define $\distance(\strategy_i, \strategy_j, t) $ as the spatial distance between $\strategy_i$ and $\strategy_j$ at time $t$. Then, the proximity costs $J^\text{prox}_i$ must satisfy the following property.
\begin{property}\label{property}
For any player~$i$ and others' trajectories $\strategy_{-i}$, consider any pair of feasible trajectories $\strategy_i, \strategy_i'\in \strategyset_i$.
Then, if $\distance(\strategy_i' ,\strategy_j, t) \leq \distance(\strategy_i ,\strategy_j, t),  \forall t, \forall j \neq i$, it must hold $J_j^\text{prox}(\strategy_i', \strategy_{-i}) \geq J_j^\text{prox}(\strategy_i, \strategy_{-i})$, $\forall j \in \players$.
\end{property}
Intuitively,~\Cref{property} ensures that for a unilateral deviation of player~$i$, the proximity cost of every player (including $i$) increases as player~$i$ chooses trajectories that are spatially closer to the others. 
Overall,~\cref{property} encompasses many possible choices of proximity costs such as 
$J_i^\text{prox}(\strategy) = - \sum_{j\neq i}  \sum_t \distance(\strategy_i, \strategy_j, t)^\alpha$,
for a given degree coefficient $\alpha>0$, or of the kind $J_i^\text{prox}(\strategy) =  \sum_{j\neq i} \sum_t \frac{1}{\distance(\strategy_i, \strategy_j, t)^\alpha}$. Additionally, ~\cref{property} is still valid whenever only distances within a certain thresholds are penalized as e.g. in
\begin{equation}
    J_i^\text{prox}(\strategy) =  \sum_{j\neq i} \sum_t \begin{cases}
            (\distance_s - \distance(\strategy_i, \strategy_j, t))^\alpha \quad &\text{if $\distance(\strategy_i, \strategy_j, t) < \distance_s$,}\\
            0 &\text{otherwise,}
        \end{cases}
        \label{eq:clear_cost_ds}
\end{equation}
where $\alpha > 1$ and $\distance_s$ is some safety distance.
\subsection{Game Equilibria and Efficency}
We consider \glspl{ne} as the solution concepts of driving games, defined as follows.
\begin{definition}[Nash Equilibrium]
    A trajectory profile $\strategy$ is a (pure) Nash equilibrium (NE) if $\forall i \in \players$, $\forall \strategy_i' \in \strategyset_i$
    \begin{equation}
        \cost_i(\strategy) \leq \cost_i(\strategy_i',\strategyother)
        \label{eq:NE_condition}.
    \end{equation}
    \label{def:PNE}
    We denote the set of NE outcomes as $\strategyset_\textrm{NE} \subseteq \strategyset$.
\end{definition} 
Different approaches have been proposed for computing NE trajectories, e.g.,~\cite{Fridovich-Keil2019,LeCleach2022ALGAMES:Games,Di2019NewtonsGames}.
Unfortunately, however, the fact that NE can be computed does not tell us anything about their quality (i.e., their \emph{efficiency}).
A common way to quantify the efficiency of a game outcome is by measuring its social cost:
\begin{definition}[Social Cost]
The \emph{social cost} $\socialcost(\strategy)$ associated with a specific outcome $\strategy$ of a game is defined as the sum over all the individual player costs:
    \begin{equation}
        \socialcost(\strategy) \defeq \sum_{i\in\players}\cost_i(\strategy).
    \end{equation}
\end{definition}
A game is more efficient if it results in a lower social cost. In the context of urban driving, efficient outcomes are usually represented by trajectories allowing the players to reach their goals, at a safe distance, and with minimal total consumption, e.g., of time, acceleration, fuel, etc.

Because the players are self-interested, they aim at minimizing their individual costs $J_i$ rather than $C$ causing inefficiency for the game.
To measure the game \emph{inefficiency} we adopt the widely used notion of \poa{}~\cite{Koutsoupias2009Worst-caseEquilibria}.
\begin{definition}[Price of Anarchy]\label{def:poa}
The \textit{Price of Anarchy} (\poa{}) is the ratio between the highest social cost at a NE and the lowest social cost overall:
    \begin{equation}
        \textrm{\poa} = \frac{\max_{\strategy \in \strategyset_\textrm{NE}} \socialcost(\strategy)}{\min_{\strategy \in \strategyset} \socialcost(\strategy)} \in [1, \infty).
    \end{equation}
\end{definition}
In general, providing PoA bounds is a hard task since these must clearly depend on the specific game and players' costs structure.
In the following, we show that driving games can be naturally modeled as a particular type of \emph{congestion games} and suitable PoA bounds can be derived inheriting -- and refining -- existing guarantees for such specific games' structure.\looseness=-1

\section{Driving Games as Congestion Games}
\label{sec:dg_as_cg}
Inspired by the robotics literature, we look at the class of driving games as agents competing for common resources~--~in this case, portions of the road at specific time instances.
In this spirit, we show that the games presented in~\Cref{sec:preliminaries} can be (re)modeled as congestion games which preserve the same key properties, allowing us to derive inefficiency bounds.

Congestion games were first introduced in~\cite{Rosenthal1973} as games where the agents' strategy corresponds to selecting a subset of the available resources. The use of each resource is penalized by a monotonic load function such that, the more players select that resource, the more cost they incur.
Hence, the total cost for a player results in the sum over the load costs of the selected resources. 
We refer to~\cite{Hespanha2017} for a more pedagogical presentation.\looseness=-1
\subsection{Congestion Game Formulation}
In the driving setting at hand, we consider the finite set of resources given by a discretization of the road in both space (i.e. a 2D grid) and time. 
We denote the set of spatio temporal resources as $\resourceset$. Then, 
each trajectory $\strategy_i$ can be mapped to a corresponding strategy $\cgstrategy_i \subseteq \resourceset$ by its spatio-temporal occupancy.
Since we consider deterministic trajectories, each agent either uses a resource or it does not. In other words, the load an agent can put on a resource is binary. Hence, the resulting load on resource $r$ is defined as $\load_{\resource}(\cgstrategy)=\sum_{i\in\players} \indicator_{\resource\in\cgstrategy_i} \in\naturals$, where $\indicator_{\bullet}$ is the indicator function. 
Each resource then has a specific load-dependent cost function $\cost_\resource:\naturals \to \reals$. 
For the purpose of this work, we restrict $\cost_\resource$ to be a polynomial with non-negative coefficients. 
The cost which an agent incurs is then constructed as:
\begin{equation*}
    \costcg_i(\cgstrategy) = \sum_{\resource\in \cgstrategy_i} \cost_\resource (\load_\resource(\cgstrategy)).
\end{equation*}
Intuitively, to minimize the above congestion cost, agents are encouraged to choose non-overlapping trajectories and thus the above formulation models -- as a first approximation -- driving games' preferences. However, it is a very crude approximation since it does not discriminate among non-overlapping trajectories and thus cannot fully model proximity costs.
In the following, we show that augmenting the resource set along a new dimension that we name ``proximity dimension" allows us to model various proximity costs that respect~\Cref{property}. 
The inclusion of the personal costs is instead straightforward (see end of this section) as already shown in the literature~\cite{Le2019Congestion}.

\begin{figure}[t!]
	\centering
	\vspace{-1.5em}
	\includegraphics[scale=0.9]{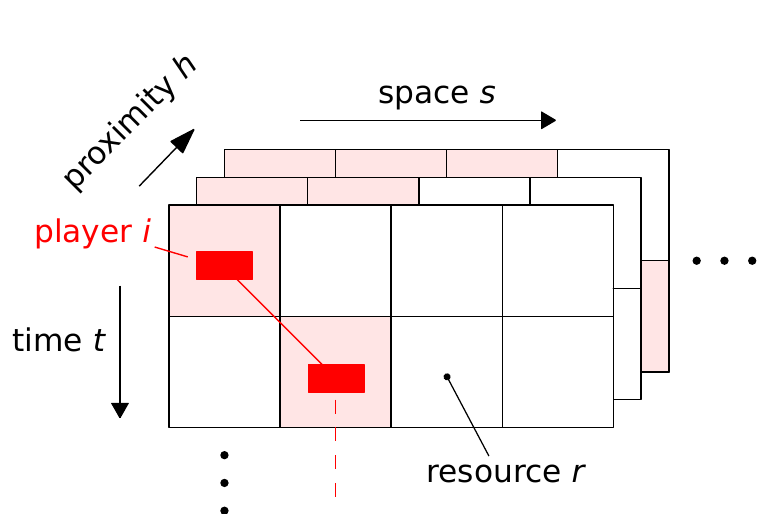}
	\caption{The set of possible resources follows from a discretization in space, time, and proximity levels. The figure depicts an example of the first two time steps of a ``one-dimensional'' road with three proximity levels, i.e., $H=3$.  The resources that are used by player $i$ are shaded in red and grow along the proximity dimension.}
	\label{fig:atomicCG_neigh}
\end{figure}
\paragraph*{Proximity levels} 
On top of the discretization in space and time, we further consider a \emph{proximity dimension} $\neighdegree \in \{0, \ldots, H-1\}$ so that a ``copy'' of the spatio-temporal resources exists for each proximity level $\neighdegree$, as shown in~\Cref{fig:atomicCG_neigh}. 
Intuitively, the trajectory of an agent progressively inflates its occupancy along this dimension, such that, at the larger proximity levels, resources can overlap even if the trajectories do not physically overlap. This allows to model proximity costs. 
More precisely, consider any given time $t$. Then, the spatial resources occupied at each proximity level are determined in the following manner: 
At the first level ($\neighdegree=0$), the trajectory $\strategy_i$ uses only the resources associated to its physical occupancy (we denote them as $[\cgstrategy_i(t)]_0$).
Then, for each successive level $h$, $\strategy_i$ uses the spatial resources that are within a neighborhood (i.e., a ball) of a given radius $\rho_\neighdegree$ around the agent's position $\strategy_i(t)$. We denote such resources as $[\cgstrategy_i(t)]_h$ and assume for simplicity that $\rho_\neighdegree > \rho_{\neighdegree-1}$.  Hence, by letting $\cost_\neighdegree(\cdot)$ represent the polynomial cost functions associated to resources of proximity level $\neighdegree$, the agents' proximity cost can be written as:
\begin{equation}\label{eq:cg_proximity_cost}
    \costcg_i(\cgstrategy) = \sum_{t=0}^{T-1} \sum_{\neighdegree = 0}^{H-1} \sum_{\resource\in [\cgstrategy_i(t)]_\neighdegree} \cost_\neighdegree (\load_{\resource}(\cgstrategy)).
\end{equation}

In~\cref{fig:atomicCG_neigh}, an illustrative example is shown with two additional proximity levels (i.e., $H=3$). According to this formulation, resources at higher levels of proximity can overlap even when players are driving at a certain distance allowing for penalization of unsafe driving maneuvers. Moreover, the use of different level-specific polynomial costs $J_\neighdegree$ allows to adjust the relative importance among proximity levels $\neighdegree$, to get much higher costs for lower levels of proximity. This allows us to model -- via the congestion game's costs of~\eqref{eq:cg_proximity_cost} -- different types of proximity costs similar to the ones presented in~\eqref{eq:clear_cost_ds}. We illustrate such expressiveness in the following numerical examples, where we consider polynomials $\cost_\neighdegree$ with different coefficients and degree.

\begin{figure}[t]
	\centering
	\hspace{-1.0em}
\includegraphics[width=.5\textwidth]{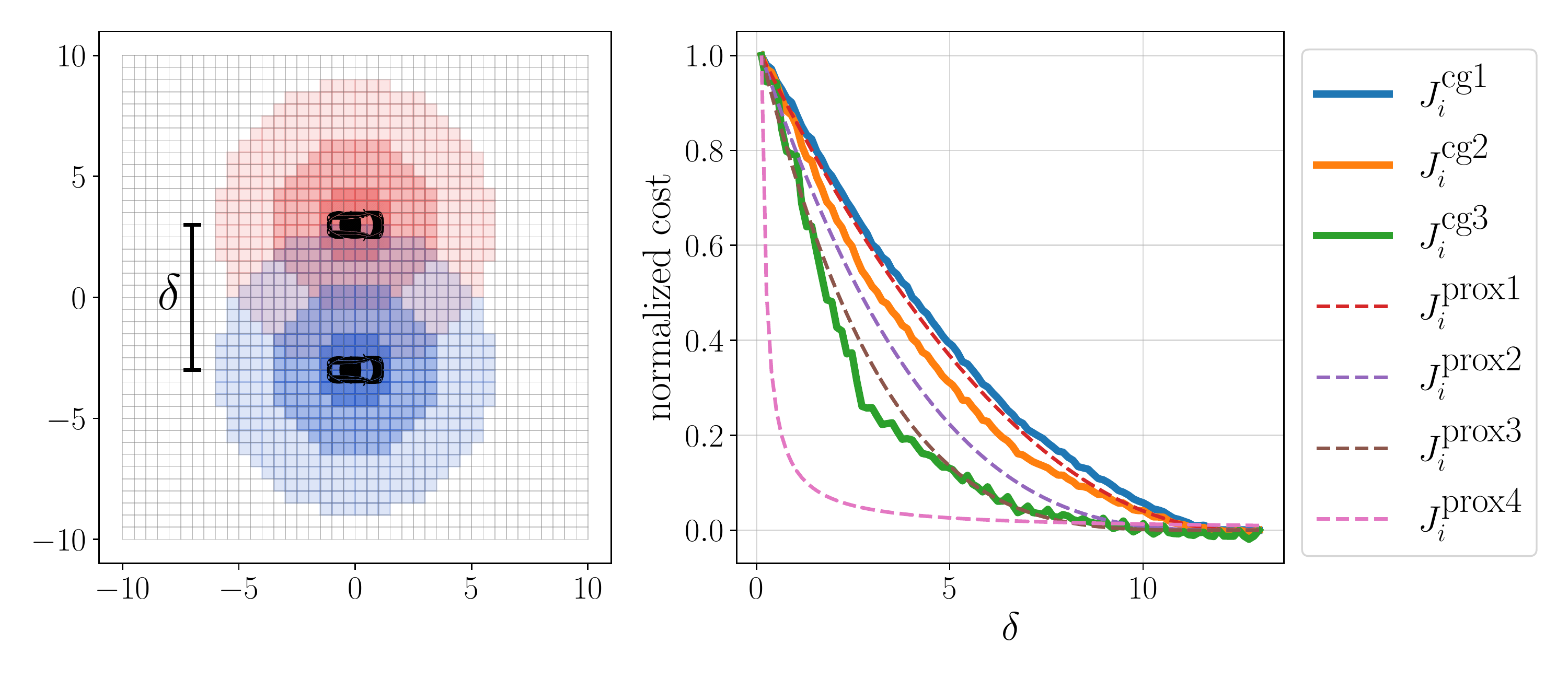}
	\caption{The left figure shows a visualization of the two-player game of~\Cref{example:twocars} at a single time instance $t$. The right figure depicts different (normalized) congestion game costs $\costcg_i$ defined in~\Cref{example:twocars}, as a function of the distance $\distance$ between the two cars. The costs differ in terms of relative weight between different proximity levels (through the polynomials $J_\neighdegree$). Other common proximity costs choices defined in~\Cref{example:twocars} are also shown.}
	\label{fig:resources_cost_viz}
\end{figure}

\begin{example}\label{example:twocars}
We consider two cars at distance $\distance$ from each other. For simplicity, we consider a fixed time step and we are only interested in how their cost $\costcg_i$ changes as a function of their distance. We take three proximity levels ($H=3$) but for interpretability, we represent all of them in the same grid and color the respective used resources with different shades, as depicted in~~\cref{fig:resources_cost_viz} (left plot). The space is discretized with $0.5 \times 0.5$m grid cells and neighborhoods are Euclidean balls with radii $\rho_0=1.5$m, $\rho_1=3.5$m, and $\rho_2=6$m, respectively. We consider polynomial resource costs of the form $\cost_\neighdegree (x) \defeq a_\neighdegree \cdot x^{d_\neighdegree}$ where $x$ is the total load and $a_\neighdegree$ and $d_\neighdegree$ are parameters that we set as follows. We fix $d_\neighdegree = 2, \forall \neighdegree$ and consider three configurations for the weights $\{a_0, a_1, a_2\}$ to set the relative importance between proximity levels: $\{1.0, 1.0, 1.0\}$, $\{.9, .4, .2\}$, and $\{1., .1, .02\}$, respectively. This leads to the three congestion game cost curves $J^\text{cg1}_i,J^\text{cg2}_i$, and $J^\text{cg3}_i$ depicted in~\cref{fig:resources_cost_viz} (right plot).
As visible, the more (relative) weight is given to low proximity levels (i.e., configurations $J^\text{cg2}_i$, and $J^\text{cg3}_i$) the steeper the cost decreases with $\distance$. Hence, the choice of weights  $\{a_\neighdegree\}_{h=1}$ can model different types of proximity costs allowing to control such steepness.
For comparison, we also compare the obtained costs with common choices of proximity costs discussed in Section~\ref{sec:preliminaries} (which are analytical functions of $\distance$): $J^\text{prox1}$, $J^\text{prox2}$, $J^\text{prox3}$ which are computed as in~\eqref{eq:clear_cost_ds} with $\distance_s = 12.5$ and $\alpha=2$, $3$, and $4$, respectively, and $J^\text{prox4}(\distance) = \distance^{-1}$.\looseness=-1
\end{example}

In the next example, we illustrate the role of the polynomial degrees $d_\neighdegree$. Indeed, it can be verified that the the normalized costs of~\Cref{fig:resources_cost_viz} are not influenced by the degree $d_h$ (since with only two agents higher degrees would result only in a higher offset and constant scaling factor). This is not the case when there are more than two agents -- as illustrated in the next example --  where $d_\neighdegree$ controls the sensitivity with respect to the number of other agents that co-occupy the same resource.

\begin{figure}[t]
	\centering
	\hspace{-1.0em}

\includegraphics[width=.46\textwidth]{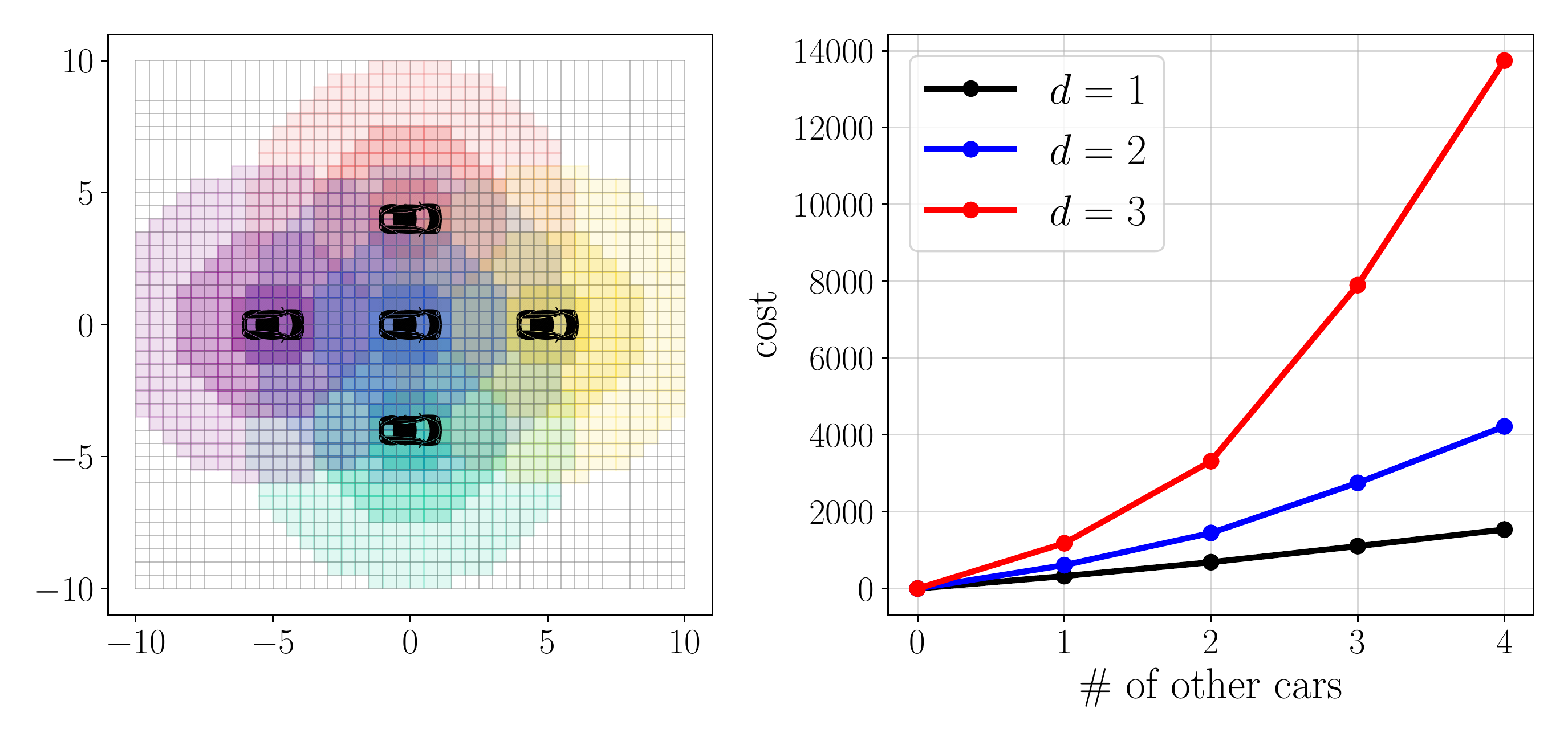}
	\caption{The left figure depicts the multi-player game configurations of~\Cref{example:3cars}. The right figure depicts the congestion game cost (up to constant offsets) of the car in position $(0,0)$ when only 1, 2, 3, or 4 of the other cars are in the game. The different cost curves correspond to using polynomial load functions $J_\neighdegree$, defined in~\Cref{example:3cars}, with degree 1, 2, or 3, respectively.\looseness=-1} %
	\label{fig:three_cars}
\end{figure}

\begin{example}\label{example:3cars}
Consider the same setup of~\Cref{example:twocars} where now more than 2 cars are on the road as displayed in~\Cref{fig:three_cars} (left plot). We are interested in computing the cost $\costcg_i$ of player~$i$ (which represents the car in position $(0,0)$) as a function of the number of other cars driving nearby. Hence, we consider the driving scenarios in which only 1, 2, 3, or 4 of the other cars are present in the game. Similar to~\Cref{example:twocars}, we use polynomial load functions $\cost_\neighdegree (x) \defeq a_\neighdegree \cdot x^{d}$. Now, we fix $\{a_0, a_1, a_2\}= \{.9,.4,.2\}$ and consider three cost configurations %
defined by degrees $d=1$, $d=2$ and $d=3$, respectively. The corresponding costs, as a function of the number of other cars are plotted in~\Cref{fig:three_cars} (right plot), removing constant offsets. As visible, the higher the polynomials' degree the steeper the cost as a function of nearby cars. Hence, the polynomial degree $d$ controls how much player~$i$ is sensible to the number of neighbouring players. We note that this is an extra degree of freedom that follows from our congestion game modeling and is not present, e.g., in the proximity costs of~\eqref{eq:clear_cost_ds}. Indeed, in~\eqref{eq:clear_cost_ds} (as well as in most considered driving game formulations~\cite{Zanardi2021}) the proximity cost of player~$i$ grows linearly with the number of other players (i.e., $d=1$).
\end{example}

 Examples~\ref{example:twocars} and~\ref{example:3cars} show that the congestion game cost formulation of~\eqref{eq:cg_proximity_cost} can naturally model a wide spectrum of proximity costs, with different degrees of freedom, thus serving as a good model for driving preferences. In addition, we more formally show that such cost formulation satisfy the driving games'~\Cref{property} defined in the previous section.
\begin{proposition}
The congestion game cost $\cost_i^\text{cg}$ defined in~\eqref{eq:cg_proximity_cost} satisfies~\Cref{property}.
\end{proposition}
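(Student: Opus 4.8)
The plan is to substitute $\costcg_j$ for $J_j^\text{prox}$ in \Cref{property} and verify the required monotonicity directly. Let $\cgstrategy$ and $\cgstrategy'$ be the congestion‑game profiles induced by the trajectory profiles $(\strategy_i,\strategyother)$ and $(\strategy_i',\strategyother)$. First I would note that, since only player $i$ changes trajectory, for every resource $\resource$ one has $\load_\resource(\cgstrategy')-\load_\resource(\cgstrategy)=\indicator_{\resource\in\cgstrategy_i'}-\indicator_{\resource\in\cgstrategy_i}\in\{-1,0,1\}$, and for every $j\neq i$ the index set $[\cgstrategy_j(t)]_\neighdegree$ over which $\costcg_j$ is summed in \eqref{eq:cg_proximity_cost} is unchanged. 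Writing $\load_\resource^{-i}$ for the (fixed) load of the other players, and using that every $\cost_\neighdegree$, being a polynomial with non‑negative coefficients, is non‑decreasing and convex on $\naturals$, the difference $\costcg_j(\cgstrategy')-\costcg_j(\cgstrategy)$ decomposes, over triples $(t,\neighdegree,\resource)$ with $\resource\in[\cgstrategy_j(t)]_\neighdegree$, into terms $\cost_\neighdegree(\load_\resource^{-i}+\indicator_{\resource\in\cgstrategy_i'})-\cost_\neighdegree(\load_\resource^{-i}+\indicator_{\resource\in\cgstrategy_i})$, which are non‑negative exactly on the resources that $i$ newly occupies and non‑positive on those it vacates. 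The whole proof then reduces to showing that, for each fixed $(t,\neighdegree)$, the gained terms dominate the lost ones; the personal cost plays no role, as it does not appear in \eqref{eq:cg_proximity_cost}.

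The core step is geometric. At time $t$ and level $\neighdegree$, $[\cgstrategy_i(t)]_\neighdegree$ and $[\cgstrategy_i'(t)]_\neighdegree$ are the balls of radius $\rho_\neighdegree$ centred at $\strategy_i(t)$ and $\strategy_i'(t)$, so the number of grid cells each shares with the ball $[\cgstrategy_j(t)]_\neighdegree$ is a non‑increasing function of the centre distance, and by hypothesis $\distance(\strategy_i'(t),\strategy_j(t))\le\distance(\strategy_i(t),\strategy_j(t))$; more generally, for every other player $k$ the overlap $\lvert[\cgstrategy_i'(t)]_\neighdegree\cap[\cgstrategy_k(t)]_\neighdegree\rvert\ge\lvert[\cgstrategy_i(t)]_\neighdegree\cap[\cgstrategy_k(t)]_\neighdegree\rvert$, so the total load of the other players carried by the resources $i$ occupies at level $\neighdegree$ weakly increases when $i$ moves. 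For $j\neq i$ I would turn this cardinality/total‑load statement into the cost comparison by exhibiting, per $(t,\neighdegree)$, an injection from the cells of $[\cgstrategy_j(t)]_\neighdegree$ that $i$ vacates into those it newly occupies which weakly increases $\load_\resource^{-i}$, and then invoking convexity of $\cost_\neighdegree$ (which makes $\cost_\neighdegree(\ell+1)-\cost_\neighdegree(\ell)$ non‑decreasing in $\ell$) to conclude that the gained increments dominate. The case $j=i$ is analogous but also involves the shift of the summation index set itself; there one uses that $[\cgstrategy_i'(t)]_\neighdegree$ is a translate of $[\cgstrategy_i(t)]_\neighdegree$ of equal size while the load it collects from the others is weakly larger by the same overlap monotonicity, and $i$ adds its own unit load either way. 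Summing the per‑$(t,\neighdegree)$ inequalities over $t$ and $\neighdegree$ yields $\costcg_j(\cgstrategy')\ge\costcg_j(\cgstrategy)$ for every $j\in\players$, which is exactly the conclusion of \Cref{property}.

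The hard part will be the discretization: because a translated ball does not generate a nested family of cell sets, the cardinality and load comparisons above are not literal set inclusions, so the injection/coupling between vacated and newly occupied cells has to be constructed and checked with some care (balancing boundary cells of the two balls against one another and tracking their $\load_\resource^{-i}$ values). I expect the cleanest route is to carry the argument out in the fine‑grid / continuum idealization — where the overlap of two equal‑radius balls, and hence all the relevant costs, depend monotonically and continuously on the pairwise distances — and to recover the grid statement as a limit; alternatively one keeps the grid fixed and spells out the monotone coupling explicitly. Everything else (the sign analysis of the per‑resource increments, monotonicity and convexity of $\cost_\neighdegree$, and the final summation) is routine.
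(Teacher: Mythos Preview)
Your proof shares the paper's skeleton --- the translation bijection for player~$i$'s own cost, the per-$(t,\neighdegree)$ decomposition, and ball-overlap monotonicity --- but diverges in how the case $j\neq i$ is handled. The paper does not use convexity of $\cost_\neighdegree$ nor build an injection between vacated and newly occupied cells. Instead, it asserts directly the \emph{pointwise} inequality $\load_\resource({\strategy_i'}^{\mathrm{cg}},\cgstrategy_{-i})\ge \load_\resource(\cgstrategy_i,\cgstrategy_{-i})$ for every $\resource\in[\cgstrategy_j(t)]_\neighdegree$ (equivalently: any cell of $j$'s ball that $i$ occupied before the move is still occupied after), and for $j=i$ the analogous pointwise bound $\load_{\Gamma(\resource)}\ge \load_\resource$ along the translation map~$\Gamma$. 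With these pointwise claims in hand, bare monotonicity of $\cost_\neighdegree$ finishes the argument --- no convexity, no coupling.

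That said, your caution is not misplaced. The paper's pointwise claims are stated on geometric intuition rather than argued; in two dimensions a ball translated to a smaller center-distance from $j$'s ball need not contain the entire old intersection with it, so strictly speaking that step would want justification too. Your approach trades the pointwise assumption for a weaker aggregate overlap statement plus convexity, which is a legitimate alternative, but --- as you already flag --- it leaves the construction of a load-monotone injection (the genuinely delicate multi-player part) still to be carried out. Both arguments therefore rest on the same geometric crux; the paper disposes of it via a stronger asserted inequality, while you defer it to a coupling that is sketched rather than built.
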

\begin{proof}

Fix strategies $\strategy_{-i}$ and consider trajectories $\strategy_i$ and $\strategy_i'$ for player~$i$. Then, at each time $t$ and proximity level $\neighdegree$ there is a one-to-one mapping between the resources $r \in [\cgstrategy_i(t)]_\neighdegree$ and the ones in $[{\strategy_i'}^\text{cg}(t)]_\neighdegree$, since the latter are simply obtained by translating the occupancy of player~$i$ from position $\strategy_i(t)$ to $\strategy_i'(t)$. Let $\Gamma: \mathcal{R} \rightarrow \mathcal{R}$ be such a mapping. Assume now that $\distance(\strategy_i' ,\strategy_j, t) \leq \distance(\strategy_i ,\strategy_j, t),  \forall j \neq i,  \forall t$. Then, at each time $t$ and neighborhood $h$,  trajectory $\strategy_i' $ utilizes resources that have more overlap with other players, compared to when trajectory $\strategy_i$ is used. That is, $\forall t \in [T], h \in [H]$ and $\forall r \in [\cgstrategy_i(t)]_\neighdegree$, $l_{\Gamma(r)}({\strategy_i'}^\text{cg}, \cgstrategy_{-i}) \geq l_r(\cgstrategy_i, \cgstrategy_{-i})$. Moreover, $\forall r \in [\cgstrategy_j(t)]_\neighdegree$, $l_r({\strategy_i'}^\text{cg}, \cgstrategy_{-i}) \geq l_r(\cgstrategy_i, \cgstrategy_{-i})$. For player~$i$ this implies that 
$\costcg_i({\strategy_i'}^\text{cg}, \cgstrategy_{-i}) = \sum_{t=0}^{T-1} \sum_{\neighdegree = 0}^{H-1} \sum_{\resource\in [{\strategy_i'}^\text{cg}(t)]_\neighdegree} J_\neighdegree (\load_{\resource}({\strategy_i'}^\text{cg}, \cgstrategy_{-i}))  =  \sum_{t=0}^{T-1} \sum_{\neighdegree = 0}^{H-1} \sum_{\resource\in [{\strategy_i}^\text{cg}(t)]_\neighdegree} J_\neighdegree (\load_{\Gamma(\resource)}({\strategy_i'}^\text{cg}, \cgstrategy_{-i})) \geq  \sum_{t=0}^{T-1} \sum_{\neighdegree = 0}^{H-1} \sum_{\resource\in [\cgstrategy_i(t)]_\neighdegree} J_\neighdegree (\load_{\resource}(\cgstrategy_i, \cgstrategy_{-i}))=  \costcg_i(\cgstrategy_i, \cgstrategy_{-i}) $, where the last inequality follows since $l_{\Gamma(r)}({\strategy_i'}^\text{cg}, \cgstrategy_{-i}) \geq l_r(\cgstrategy_i, \cgstrategy_{-i})$ and the load functions $\cost_\neighdegree$ have non-negative coefficients. Moreover, for any other player $j\neq i$ it holds  $\costcg_j({\strategy_i'}^\text{cg}, \cgstrategy_{-i})-  \costcg_j({\cgstrategy_i}, \cgstrategy_{-i}) = \sum_{t=0}^{T-1} \sum_{\neighdegree = 0}^{H-1} \sum_{\resource\in [{\cgstrategy_j}(t)]_\neighdegree} \big[J_\neighdegree (\load_{\resource}({\strategy_i'}^\text{cg}, \cgstrategy_{-i})) - J_\neighdegree (\load_{\resource}({\cgstrategy_i}^\text{cg}, \cgstrategy_{-i}))\big] \geq 0$, due to monotonicity of $\cost_\neighdegree$. 
\end{proof}
\paragraph*{Personal cost}
Following similar works on congestion games~\cite{Le2019Congestion} we add a personal cost term to the overall cost for player~$i$ choosing strategy $\strategy_i$:
\begin{equation}
    \cost_i(\strategy) = \costcg_i(\cgstrategy) + \costper_i(\strategy_i).
\end{equation}
First, note that it was shown that such a game retains its exact potential game status, meaning a pure Nash Equilibrium is still guaranteed to exist~\cite{Le2019Congestion}. 
Moreover, as we more formally show in the next section, the personal cost can only have a positive influence on the inefficiency bounds of the game.

\section{Inefficiency Bounds}\label{sec:ineff_bounds}
In this section we present the inefficiency bounds that follow from our congestion game modeling of driving games. 
First, we recall existing \poa{} bounds for congestion games with polynomial load cost functions~\cite{Aland2011Exact}.
We then show that these bounds can be refined thanks to the particular cost structure of driving games.

The following theorem follows from~\cite{Aland2011Exact} bounding the \poa{} as a function of the degree of the polynomial load functions. 
\begin{fact}\label{thm:poa_bounds_original}\cite[Theorem 4.1]{Aland2011Exact}
For a congestion games with polynomial resource load function with non-negative coefficients and degree at most $d\in\naturals$, it holds 
\begin{equation}
   \text{\poa{}} \leq  \frac{(k+1)^{2d+1}-k^{d+1}(k+2)^d}{(k+1)^{d+1}-(k+2)^d+(k+1)^d-k^{d+1}},
    \label{eq:cg_poa_upper_bound}
\end{equation}
where $k\defeq\floor{\Phi_d}$ and $\Phi_d$ is the positive real solution to $(x+1)^d = x^{d+1}$.
\end{fact}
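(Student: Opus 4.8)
The plan is to prove the upper bound in \eqref{eq:cg_poa_upper_bound} along the lines of \cite[Theorem~4.1]{Aland2011Exact}, via the smoothness framework for congestion games. First I would fix a worst-case pure Nash equilibrium $\cgstrategy$ and a social optimum $\widehat{\cgstrategy}$, and apply the Nash inequality \eqref{eq:NE_condition} to the unilateral deviation in which each player $i$ switches to its optimum strategy $\widehat{\cgstrategy}_i$. Summing over $i\in\players$ and using that such a deviation increases the load of any $\resource\in\widehat{\cgstrategy}_i$ by at most one, I obtain
\[
   \socialcost(\cgstrategy)\;=\;\sum_{i\in\players}\costcg_i(\cgstrategy)\;\le\;\sum_{\resource\in\resourceset}\load_\resource(\widehat{\cgstrategy})\,\cost_\resource\!\bigl(\load_\resource(\cgstrategy)+1\bigr),
\]
where $\load_\resource(\widehat{\cgstrategy})$ counts the players using $\resource$ at the optimum. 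By linearity of each $\cost_\resource$ in its non-negative coefficients the estimate decomposes over monomials, and since the resulting bound is increasing in the degree it is enough to treat a single monomial $\cost_\resource(x)=x^{d}$, for which $\socialcost(\cgstrategy)=\sum_{\resource}\load_\resource(\cgstrategy)^{d+1}$ and likewise at $\widehat{\cgstrategy}$.

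The core is then a self-contained arithmetic fact, independent of the game: determine the smallest value $c(d)$, equivalently the pair $(\lambda,\mu)$ with $\mu<1$ minimising $\lambda/(1-\mu)$ subject to
\[
   y\,(x+1)^{d}\;\le\;\lambda\, y^{d+1}+\mu\, x^{d+1}\qquad\text{for all }x,y\in\naturals.
\]
Granting this, substituting $x=\load_\resource(\cgstrategy)$ and $y=\load_\resource(\widehat{\cgstrategy})$ into the displayed bound and summing over $\resource$ gives $\socialcost(\cgstrategy)\le\lambda\,\socialcost(\widehat{\cgstrategy})+\mu\,\socialcost(\cgstrategy)$, hence $\poa\le\lambda/(1-\mu)=c(d)$.

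I expect the delicate step to be the exact evaluation of $c(d)$. For a fixed $\mu$ the tight $\lambda$ equals $\sup\{(y(x+1)^{d}-\mu x^{d+1})/y^{d+1}:x\in\naturals,\ y\in\naturals,\ y\ge 1\}$; performing the inner maximisation over $y$ and then the outer minimisation over $\mu$ shows the extremal loads straddle the positive real root $\Phi_d$ of $(x+1)^{d}=x^{d+1}$, so the relevant integer is $k=\floor{\Phi_d}$. What remains is bookkeeping: verifying by elementary monotonicity estimates on the discrete objective that the binding pair is $(x,y)=(k,k+1)$ (and that no lower-degree monomial is worse), and simplifying the resulting rational expression into the closed form of \eqref{eq:cg_poa_upper_bound}; as a sanity check, for $d=1$ one has $\Phi_1$ equal to the golden ratio, $k=1$, and the formula returns the classical value $5/2$. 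Should a tight characterisation be wanted, one would additionally reproduce from \cite{Aland2011Exact} the recursively layered family of congestion instances whose price of anarchy converges to $c(d)$.
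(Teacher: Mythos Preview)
The paper does not actually prove this statement: it is recorded as a \emph{Fact} and attributed verbatim to \cite[Theorem~4.1]{Aland2011Exact}. What the paper \emph{does} prove is the refinement in \Cref{thm:poa_bounds_improved}, and that proof (in the Appendix) specialises at $\alphas=0$ to a proof of \Cref{thm:poa_bounds_original}. Your proposal follows exactly this Aland et al.\ route --- Nash inequality summed over players, reduction to the resource-wise inequality $y(x{+}1)^d\le\lambda y^{d+1}+\mu x^{d+1}$, reduction to the top-degree monomial, and then optimisation of $\lambda/(1-\mu)$ --- so the approach is the same as the one the paper adapts.

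One concrete slip to flag: the binding constraint is \emph{not} at $(x,y)=(k,k{+}1)$. The key reduction (Lemma~5.7 in \cite{Aland2011Exact}, restated as \Cref{lemma:57} here) shows that the supremum over $(x,y)\in\mathbb{N}_0\times\mathbb{N}$ of $\bigl(\tfrac{x+1}{y}\bigr)^d-\mu\bigl(\tfrac{x}{y}\bigr)^{d+1}$ is already attained at $y=1$; the optimal $\mu^\star$ is then pinned down by the condition that the two integer values $x=k$ and $x=k{+}1$ (both with $y=1$) give the same objective value, i.e.\ $g(\mu^\star,k)=g(\mu^\star,k{+}1)$. So the ``pair'' that matters is the pair of adjacent integers $\{k,k{+}1\}$ for $x$, not a single $(x,y)$ point with $y=k{+}1$. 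Apart from this mis-identification, the outline is sound and matches the paper's (cited/adapted) argument.
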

Following our congestion game modeling, \Cref{thm:poa_bounds_original} provides a first (and crude) inefficiency guarantee which shows that the driving game becomes less and less efficient the higher is the maximum polynomial degree $d$. As discussed in~\Cref{sec:dg_as_cg}, degree $d$ represents the agents' sensitiveness to nearby cars, a factor which -- intuitively -- can lead to suboptimal equilibria. We expect however, that in practical scenarios $d$ should be rather small (current proximity costs such as~\eqref{eq:clear_cost_ds} assume $d=1$) since the the actual number of neighbouring cars should have a limited impact on the drivers' risk.

\subsection{Refining the Inefficiency Bounds for Driving Games}
While~\Cref{thm:poa_bounds_original} provides a range of PoA guarantees depending on the degree $d$ of the driving game, we will see that these are quite conservative bounds since they depend only on the agents' proximity cost $J^\text{cg}_i$ and neglect the personal agents' preferences. Indeed, we intuitively expect that in the limit where the proximity costs $J^\text{cg}_i$ become negligible compared to the personal ones $J^\text{per}_i$, the agents' costs become more and more ``decoupled" (since the 
$J_i^\text{per}$ only depends on $\gamma_i$) and thus the game \poa{} should tend to $1$. This is not captured by the guarantees obtained so far and serves as main motivation for the results presented next. 

To exploit the relative importance between personal and proximity costs, we define the following main quantity.
\begin{definition} Let $\strategyset_\textrm{NE}$ be the set of all NE and $\strategyset^\star = \arg \min_{\strategy \in \strategyset} C(\strategy)$ be the set of social optima strategies. We define $\alphas \in \mathbb{R}_+$ to be the largest constant such that for all agents $i$ and strategies $\strategy \in \strategyset_\textrm{NE}\cup \strategyset^\star$,
\begin{equation}
    \costper_i(\strategy_i) \geq \alphas  \cdot \costcg_i(\cgstrategy).
    \label{eq:alpha_condition}
\end{equation}
\label{alpha_star_definition}
\end{definition}
\vspace{-1.8em}
Note that $\alphas \geq 0$ since we have assumed positive costs without loss of generality. However, it is also not hard to imagine situations where $\alphas >0$. For instance, this is the case when $J^\text{per}_i$ is lower bounded by $\underline{J}^\text{per}$ (e.g., capturing total acceleration or fuel consumption) and $J^\text{cg}_i$ is upper bounded by $\overline{J}^\text{cg}$. In such a case, $\alphas \geq \underline{J}^\text{per} / \overline{J}^\text{cg}  > 0 $. Moreover, note that condition~\eqref{alpha_star_definition} needs to hold only for equilibria and socially optima policies, which is reasonable to assume that are not colliding and thus $\overline{J}^\text{cg}$ would be small.
In general, $\alphas$ is a (conservative) measure of the relative importance between personal and proximity costs. In the next theorem, we show that $\alphas$ can be used to obtain and characterize refined PoA guarantees.\looseness=-1
\begin{figure}[t]
	\centering
	\includegraphics[width=0.4\textwidth]{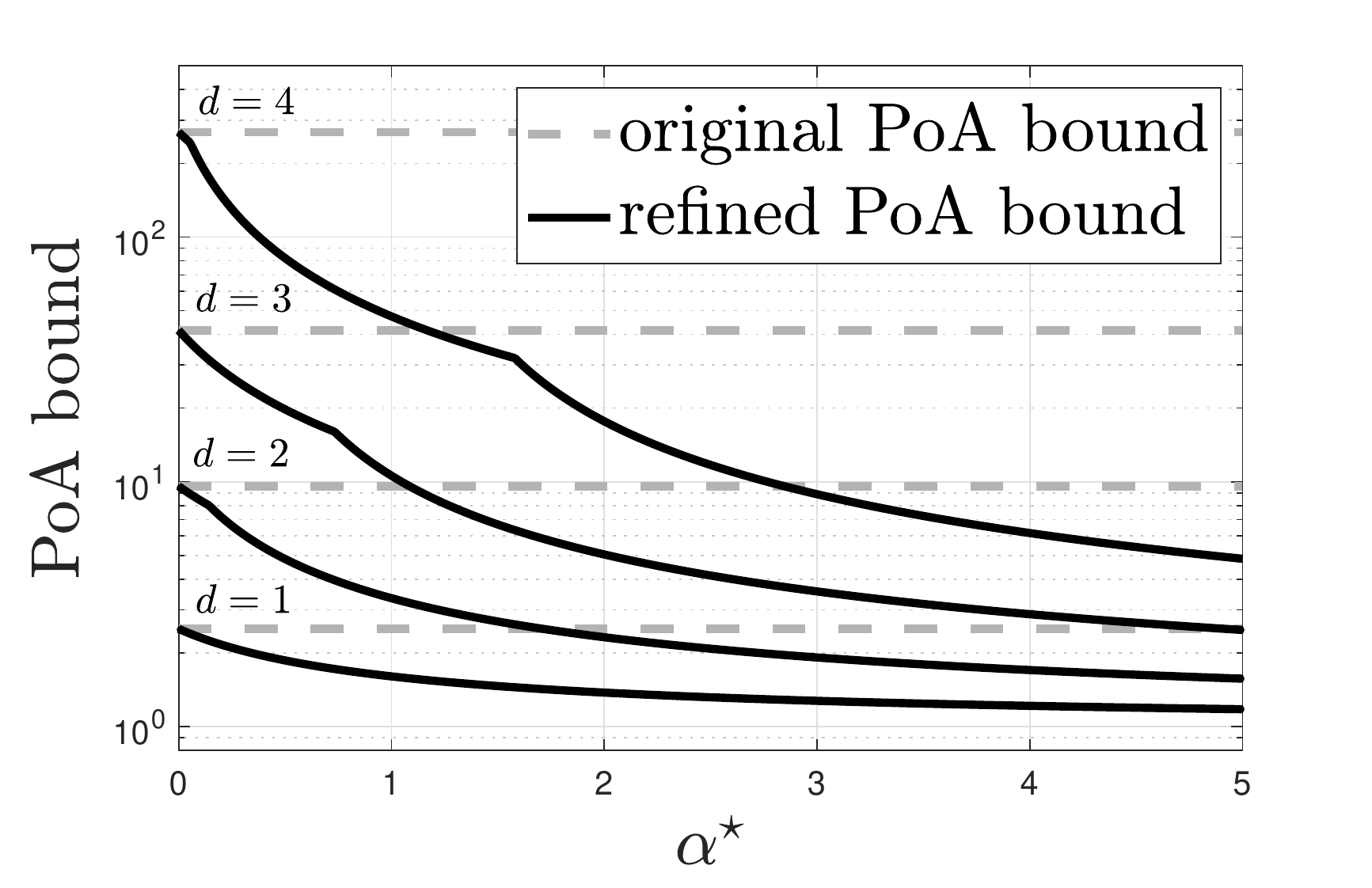}
	\caption{\poa{} upper bounds resulting from the congestion game formulation for different polynomial degrees $d$ (see ~\cref{thm:poa_bounds_original}), and refined based on the personal cost parameter $\alphas$ according to ~\cref{thm:poa_bounds_improved}.}
	\label{fig:improved_bounds}
\end{figure}
\begin{theorem}
\label{thm:poa_bounds_improved}
Consider driving games modeled as congestion games according to Section~\ref{sec:dg_as_cg}, with polynomial resource load function with non-negative coefficients and degree at most $d\in\naturals$. Moreover, consider $\alphas$ as per~\Cref{alpha_star_definition}. Then, \poa{} is upper bounded by
\begin{equation}
     \frac{(k+1)^{2d+1}-k^{d+1}(k+2)^d + \alphas\left((k+1)^{d+1}-k^{d+1}\right)}{(1+\alphas)\left((k+1)^{d+1}-k^{d+1}\right) - (k+2)^d+(k+1)^d} 
    \label{eq:cg_poa_upper_bound_improved}.
\end{equation}
Where $k\defeq \floor{\Psi_{d,\alphas}}$ and $\Psi_{d,\alphas}$ is the positive real solution to $x^{d+1}+\alphas x^{d+1} = (x+1)^d +\alphas$.
\end{theorem}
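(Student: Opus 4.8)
The plan is to run a smoothness-type argument in the spirit of~\cite{Aland2011Exact} on the congestion part of the cost, and to use~\Cref{alpha_star_definition} to ``decouple'' the personal part. Fix an arbitrary Nash equilibrium $\strategy\in\strategyset_\textrm{NE}$ and a social optimum $\strategy^\star\in\strategyset^\star$; since $\poa=\max_{\strategy\in\strategyset_\textrm{NE}}\socialcost(\strategy)/\min_{\strategy}\socialcost(\strategy)$, it suffices to show $\socialcost(\strategy)\leq B\,\socialcost(\strategy^\star)$ with $B$ the claimed bound~\eqref{eq:cg_poa_upper_bound_improved}. Throughout, write $\socialcost^\text{cg}(\strategy)\defeq\sum_{i}\costcg_i(\cgstrategy)$ and $\socialcost^\text{per}(\strategy)\defeq\sum_i\costper_i(\strategy_i)$, so that $\socialcost(\strategy)=\socialcost^\text{cg}(\strategy)+\socialcost^\text{per}(\strategy)$, and let $\cgstrategy^\star$ denote the congestion strategy induced by $\strategy^\star$.

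First I would derive a smoothness inequality on the congestion term. Applying the NE condition~\eqref{eq:NE_condition} with the unilateral deviation $\strategy_i\mapsto\strategy^\star_i$ and summing over $i$ gives $\socialcost(\strategy)\leq\sum_i\cost_i(\strategy^\star_i,\strategyother)=\sum_i\costcg_i\bigl((\strategy^\star_i)^\text{cg},\cgstrategy_{-i}\bigr)+\socialcost^\text{per}(\strategy^\star)$. As is standard for congestion games, a deviating player~$i$ sees on every resource $\resource\in(\strategy^\star_i)^\text{cg}$ a load of at most $\load_\resource(\cgstrategy)+1$, so by non-negativity of the polynomial coefficients $\sum_i\costcg_i\bigl((\strategy^\star_i)^\text{cg},\cgstrategy_{-i}\bigr)\leq\sum_{\resource\in\resourceset}\load_\resource(\cgstrategy^\star)\,\cost_\resource(\load_\resource(\cgstrategy)+1)$. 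After the reduction of~\cite{Aland2011Exact} to the case where all resource costs are monomials of degree exactly $d$, and invoking the combinatorial inequality $y(x+1)^d\leq\lambda\,x^{d+1}+\mu\,y^{d+1}$ (valid for all integers $x,y\geq0$, for suitable $\lambda,\mu\geq0$), the right-hand side is bounded by $\lambda\,\socialcost^\text{cg}(\strategy)+\mu\,\socialcost^\text{cg}(\strategy^\star)$, using $\socialcost^\text{cg}(\strategy)=\sum_{\resource}\load_\resource(\cgstrategy)\,\cost_\resource(\load_\resource(\cgstrategy))$. Altogether,
\begin{equation*}
  \socialcost(\strategy)\leq\lambda\,\socialcost^\text{cg}(\strategy)+\mu\,\socialcost^\text{cg}(\strategy^\star)+\socialcost^\text{per}(\strategy^\star).
\end{equation*}

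Next I would bring in $\alphas$. Summing the defining inequality~\eqref{eq:alpha_condition} over all players (legitimate since both $\strategy$ and $\strategy^\star$ lie in $\strategyset_\textrm{NE}\cup\strategyset^\star$) yields $\socialcost^\text{per}(\strategy)\geq\alphas\,\socialcost^\text{cg}(\strategy)$ and $\socialcost^\text{per}(\strategy^\star)\geq\alphas\,\socialcost^\text{cg}(\strategy^\star)$; the first gives $\socialcost^\text{cg}(\strategy)\leq\socialcost(\strategy)/(1+\alphas)$. Substituting this into the displayed bound and rearranging -- here the crucial point is that $\lambda$ may now be as large as $1+\alphas$ rather than only $<1$, which is exactly what opens up a better $(\lambda,\mu)$ trade-off -- gives
\begin{equation*}
  \frac{1+\alphas-\lambda}{1+\alphas}\,\socialcost(\strategy)\leq\mu\,\socialcost^\text{cg}(\strategy^\star)+\socialcost^\text{per}(\strategy^\star).
\end{equation*}
Since $\mu\geq1$ automatically (set $x=0$ in the combinatorial inequality) and $\socialcost^\text{per}(\strategy^\star)\geq\alphas\,\socialcost^\text{cg}(\strategy^\star)$, a short check shows $\mu\,\socialcost^\text{cg}(\strategy^\star)+\socialcost^\text{per}(\strategy^\star)\leq\tfrac{\mu+\alphas}{1+\alphas}\bigl(\socialcost^\text{cg}(\strategy^\star)+\socialcost^\text{per}(\strategy^\star)\bigr)=\tfrac{\mu+\alphas}{1+\alphas}\,\socialcost(\strategy^\star)$, whence $\poa\leq\tfrac{\mu+\alphas}{1+\alphas-\lambda}$ for every admissible combinatorial pair $(\lambda,\mu)$.

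It then remains to minimise $\tfrac{\mu+\alphas}{1+\alphas-\lambda}$ over the pairs $(\lambda,\mu)$ for which $y(x+1)^d\leq\lambda x^{d+1}+\mu y^{d+1}$ holds for all non-negative integers $x,y$. This is the same parametric optimisation carried out in~\cite{Aland2011Exact}, but with $1-\lambda$ replaced by $1+\alphas-\lambda$ and $\mu$ by $\mu+\alphas$ in the objective: the boundary of the feasible $(\lambda,\mu)$ region is piecewise linear in the relevant variables, the minimiser sits at the breakpoint where two integer constraints -- at consecutive loads around $\floor{\Psi_{d,\alphas}}$ -- are simultaneously tight, and solving the resulting $2\times 2$ linear system for $(\lambda,\mu)$ and substituting back produces exactly~\eqref{eq:cg_poa_upper_bound_improved} with $k=\floor{\Psi_{d,\alphas}}$; setting $\alphas=0$ recovers $\Phi_d$ and~\cref{thm:poa_bounds_original}. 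I expect this final optimisation -- pinning down which pair of integer constraints is active as a function of $\alphas$, and checking $\lambda<1+\alphas$ at that point -- to be the main technical obstacle; everything preceding it is routine once the congestion-game reduction of~\Cref{sec:dg_as_cg} is in place.
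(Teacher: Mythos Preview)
Your proposal is correct and follows essentially the same smoothness-based route as the paper's own proof: both reduce to the per-resource inequality $y(x+1)^d\leq\tilde\lambda\,y^{d+1}+\tilde\mu\,x^{d+1}$, use $\alphas$ to enlarge the admissible smoothness region (your observations $\mu\geq 1$ and $\lambda<1+\alphas$ are exactly the paper's checks $\latis\geq 1$ and $\mutis<1+\alphas$), and then solve the resulting one-parameter optimisation by locating the active integer constraint at $k=\lfloor\Psi_{d,\alphas}\rfloor$ via the adapted lemmas of~\cite{Aland2011Exact}. The only cosmetic difference is that you argue directly from the Nash inequality instead of passing through the $(\lambda,\mu)$-smoothness abstraction, and your $(\lambda,\mu)$ labels are swapped relative to the paper's $(\tilde\mu,\tilde\lambda)$.
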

It can be verified that the bound~\eqref{eq:cg_poa_upper_bound_improved} above is strictly smaller than the one of~\Cref{thm:poa_bounds_original} for all $\alphas >0$ and, as expected, tends to $1$ as $\alphas \rightarrow \infty$. We visualize such refined guarantees 
in~\cref{fig:improved_bounds} for different degrees $d$ and as a function of $\alphas$. Notice that when $\alphas=0$ (in which case agents' personal costs are negligible) we retrieve the original upper bounds from~\Cref{thm:poa_bounds_original}. We outline the main steps to prove~\Cref{thm:poa_bounds_improved} below, while its full proof can be found in the Appendix for completeness.

\subsection{Proof Outline for~\Cref{thm:poa_bounds_improved}}
The overall proof follows the same methodology as the proof of~\cite[Thm. 7]{Aland2011Exact} but the intermediate steps have to be carefully adjusted to: 1) include agents' personal costs and 2) exploit the lower bounding constant $\alphas$ of~\cref{alpha_star_definition}. In what follows, we leave out the ``cg''-superscript in the strategies $\strategy$ in favor of a simpler notation. Moreover, we let $\mathcal{P}_d$ be the set of polynomials up to degree $d$.\looseness=-1

The obtained PoA guarantees utilize the fact that congestion games are $(\lambda, \mu)$-\emph{smooth} (in the sense of~\cite{Roughgarden2015}), i.e., there exist $\lambda > 0, \mu < 1$ such that for every pair $\strategy',\strategy$ of outcomes,
\begin{equation}
    \sum_{i\in \players} \cost_i(\strategy_i',\strategy_{-i}) \leq \lambda\socialcost(\strategy') +\mu\socialcost(\strategy).
    \label{eq:proof:smooth}
\end{equation}
According to \cite{Roughgarden2015}, this directly implies that
their \poa{} is upper-bounded by $\frac{\lambda}{1-\mu}$.
Moreover, as also noted in \cite{Roughgarden2015}, since we consider PoA of pure NE, it is sufficient that smoothness~\eqref{eq:proof:smooth} holds only for $\strategy' \in \strategyset_\textrm{NE}$ and $\strategy \in \Gamma^\star$. We will make use of such weaker condition to exploit the factor $\alphas$ of~\Cref{alpha_star_definition}.

We can use the above facts to prove the following lemma.
\begin{lemma}
Let $\alphas$ defined in~\Cref{alpha_star_definition}. Then, the PoA of our congestion driving game formulation is upper bounded by:
\begin{equation}
\begin{aligned}
\min_{\substack{\lati \in \mathbb{R} \\ \muti \in (0,1 +\alphas) }} \quad & \frac{\lati +\alphas }{1-\muti +\alphas }\\
\textrm{s.t.} \quad \quad \, &  y \cdot l(x+1) \leq \lati \cdot y \cdot \cost(y)+\muti \cdot x \cdot l(x)\\
  &\forall x,y \in \mathbb{N}_0, \cost\in \mathcal{P}_d  . \\
\end{aligned}
\label{eq:proof:min_alpha_eqvar}
\end{equation}
\begin{proof}
According to~\cite{Roughgarden2015}, a minimum PoA upper-bound can be achieve by minimizing $\frac{\lambda}{1-\mu}$ subject to the smoothness condition~\eqref{eq:proof:smooth} (for $\strategy' \in \strategyset_\textrm{NE}$ and $\strategy \in \Gamma^\star$) which, by plugging the driving games' costs becomes:
\begin{equation}
\begin{aligned}
    \sum_{i}&\costcg_i(\strategy_i',\strategyother) + \sum_{i}\costper_i(\strategy_i')  \leq \lambda \cdot \Big( \sum_{i}\costcg_i(\strategy') \\ &  +\sum_{i}\costper_i(\strategy_i')\Big) + \mu \cdot \Big(\sum_{i}\costcg_i(\strategy)+ \sum_{i}\costper_i(\strategy_i)\Big).
    \label{eq:proof:smooth_cg_w_per_cost}
\end{aligned}
\end{equation}

Let us now assume that $\lambda>1$ (we will prove in Fact~\ref{fact:lambdagreaterthan1} that this is without loss of generality). Then, 
by using~\cref{alpha_star_definition}, condition~\eqref{eq:proof:smooth_cg_w_per_cost} is satisfied whenever
\begin{equation}
\begin{aligned}
    \sum_{i}\costcg_i(\strategy_i',\strategyother)  \leq &  \underbrace{((1+\alphas)\lambda -\alphas)}_{\defeq \lati}\cdot \sum_{i}\costcg_i(\strategy') \\&+ \underbrace{(1+\alphas)}_{\defeq \muti}\mu \cdot \sum_{i}\costcg_i(\strategy),
    \label{eq:proof:alpha_smooth}
\end{aligned}
\end{equation}
where we have defined auxiliary smoothness constants $\lati \in \mathbb{R}$ and $\muti \in (0, 1 + \alphas)$.
Moreover, since $\sum_{i}\costcg_i(\strategy) = \sum_{i}\sum_{\resource\in\resourceset} \load_\resource^i(\strategy_i) \cdot \rescost_\resource(\load_\resource(\strategy)) = \sum_{\resource\in\resourceset} \load_\resource(\strategy) \cdot \rescost_\resource(\load_\resource(\strategy))$, and a deviation by a single player means that the load on resource $\resource$ increases at most by $1$, a sufficient condition for~\eqref{eq:proof:alpha_smooth} to hold is:
\begin{equation*}
\begin{aligned}
\sum_{\resource\in\resourceset} \load_\resource(\strategy') \cdot \rescost_\resource(\load_\resource(\strategy) + 1) &  \leq 
\sum_{\resource\in\resourceset}\Big[\lati \cdot \load_\resource(\strategy') \cdot \rescost_\resource(\load_\resource(\strategy') \\&  \qquad + \muti \cdot \load_\resource(\strategy) \cdot \rescost_\resource(\load_\resource(\strategy))\Big].
\end{aligned}
\end{equation*}
Since $\frac{\lambda}{1- \mu} = \frac{\lati +\alphas }{1-\muti +\alphas } $ and $\load_\resource(\strategy')$ and $\load_\resource(\strategy)$ are both in $\mathbb{N}_0$, finding a \poa{} upper bound for our driving game can be formulated as finding a solution to~\eqref{eq:proof:min_alpha_eqvar}.
\end{proof}
\end{lemma}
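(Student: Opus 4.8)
The plan is to specialize the $(\lambda,\mu)$-smoothness framework of~\cite{Roughgarden2015} to the composite cost $\cost_i=\costcg_i+\costper_i$, then eliminate the personal terms using $\alphas$ so that only the congestion costs remain, and finally exploit the resource structure of congestion games to collapse the global smoothness inequality into the single per-resource constraint appearing in~\eqref{eq:proof:min_alpha_eqvar}. A guiding remark throughout is that, since we only bound the PoA of pure NE, it suffices to require the smoothness inequality~\eqref{eq:proof:smooth} for $\strategy'\in\strategyset_\textrm{NE}$ and $\strategy\in\Gamma^\star$; this restriction is exactly what makes the bound~\eqref{eq:alpha_condition} defining $\alphas$ applicable, as that bound is only guaranteed on equilibria and social optima.

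First I would expand the smoothness condition into its congestion and personal parts, obtaining~\eqref{eq:proof:smooth_cg_w_per_cost}, and move the personal terms to the right so they appear with coefficients $\lambda-1$ (multiplying $\sum_i\costper_i(\strategy_i')$) and $\mu$ (multiplying $\sum_i\costper_i(\strategy_i)$). Assuming $\lambda>1$ --- justified without loss of generality in Fact~\ref{fact:lambdagreaterthan1} --- both coefficients are non-negative, so each personal term can be lower bounded by the corresponding congestion cost via $\costper_i\geq\alphas\costcg_i$ from~\Cref{alpha_star_definition}, evaluated at $\strategy'$ (a NE) and at $\strategy$ (a social optimum). Collecting the resulting coefficients turns a sufficient condition into~\eqref{eq:proof:alpha_smooth} with $\lati=(1+\alphas)\lambda-\alphas$ and $\muti=(1+\alphas)\mu$, the latter lying in $(0,1+\alphas)$ because $\mu\in(0,1)$. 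A one-line algebraic check then shows the objective is invariant under this reparametrization, $\frac{\lambda}{1-\mu}=\frac{\lati+\alphas}{1-\muti+\alphas}$, so it suffices to minimize $\frac{\lati+\alphas}{1-\muti+\alphas}$ over the $(\lati,\muti)$ for which~\eqref{eq:proof:alpha_smooth} holds.

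Next I would reduce~\eqref{eq:proof:alpha_smooth} to a per-resource inequality. On the two right-hand sums I would use the standard congestion identity $\sum_i\costcg_i(\strategy)=\sum_{\resource\in\resourceset}\load_\resource(\strategy)\,\rescost_\resource(\load_\resource(\strategy))$, and on the left I would observe that a unilateral deviation increases any resource's load by at most one, so $\load_\resource(\strategy_i',\strategy_{-i})\leq\load_\resource(\strategy)+1$ for every $\resource\in\strategy_i'$; summing over players bounds $\sum_i\costcg_i(\strategy_i',\strategy_{-i})$ by $\sum_{\resource}\load_\resource(\strategy')\,\rescost_\resource(\load_\resource(\strategy)+1)$. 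A sufficient condition for~\eqref{eq:proof:alpha_smooth} is then that the inequality hold term by term; writing $y=\load_\resource(\strategy')$, $x=\load_\resource(\strategy)$ and quantifying over all $x,y\in\naturals_0$ and all polynomials $\cost\in\mathcal{P}_d$ recovers exactly the constraint of~\eqref{eq:proof:min_alpha_eqvar}. Combined with the invariance of the objective, this establishes that the PoA is upper bounded by the optimal value of that program.

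I expect the delicate step to be the personal-cost absorption. It depends simultaneously on the sign of $\lambda-1$ (which is why the $\lambda>1$ reduction must be secured, here deferred to Fact~\ref{fact:lambdagreaterthan1}) and on being able to invoke the $\alphas$ inequality at \emph{both} arguments $\strategy'$ and $\strategy$ --- legitimate only under the NE/optimum-restricted form of smoothness. Carefully tracking which outcome is required to be an equilibrium and which a social optimum, and verifying that the reparametrization leaves $\frac{\lambda}{1-\mu}$ unchanged, are precisely the points where the argument diverges from~\cite{Aland2011Exact} and where the bookkeeping must be done with care.
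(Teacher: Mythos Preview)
Your proposal is correct and follows essentially the same route as the paper: expand the smoothness condition into congestion and personal parts, absorb the personal terms via $\costper_i\ge\alphas\costcg_i$ (using $\lambda>1$ and the NE/social-optimum restriction), reparametrize to $(\lati,\muti)$ while checking $\frac{\lambda}{1-\mu}=\frac{\lati+\alphas}{1-\muti+\alphas}$, and then reduce to the per-resource constraint via the standard congestion identity and the ``load increases by at most one'' bound. Your write-up is in fact slightly more explicit than the paper about why the $\alphas$-absorption step needs both $\lambda>1$ and the restricted smoothness hypothesis, which is a welcome clarification.
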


The remaining part of the proof utilizes a series of intermediate lemmas which can be obtained adapting the ones from~\cite{Aland2011Exact} to our modified problem~\eqref{eq:proof:min_alpha_eqvar}; we refer to the Appendix for their full claims and proofs. Essentially:
\begin{itemize}
    \item[1)] Problem~\eqref{eq:proof:min_alpha_eqvar} has the same solution as \begin{equation}
        \inf_{\muti \in (0, 1+\alphas)}{\left\{\max_{x\in\mathbb{N}_0}{\left\{\frac{\lati^\star + \alphas}{1-\muti+\alphas}\right\}}\right\}}, \label{eq:final _obj}
    \end{equation}
    with $\lati^\star = (x+1)^d-\muti\cdot x^{d+1}$. 
\item[2)] The optimal values of $\muti$ and $x$ that solve~\eqref{eq:final _obj} are $\mutis = \frac{(k+2)^d - (k+1)^d}{(k+1)^{d+1} -k^{d+1}}$ and $k= \floor{\Psi_{d,\alphas}}$, respectively, where  $\Psi_{d,\alphas}$ is the positive real solution to $x^{d+1}+\alphas x^{d+1} = (x+1)^d +\alphas$.
\end{itemize}
Finally,~\Cref{thm:poa_bounds_improved} is proven by plugging $\lati^\star$ and $\mutis$ into the objective of~\eqref{eq:final _obj}. We are left, however, with showing that considering $\lambda \geq 1$ is without loss of generality. This is equivalent to showing that $\lambda^\star = \frac{\lati^\star + \alphas}{1 +\alphas}$ satisfies $\lambda^\star \geq 1$ and thus proving the following fact. 
\begin{fact}
The value $\lati^\star$ of $\lati$ that solves problem~\eqref{eq:proof:min_alpha_eqvar} satisfies $\lati^\star \geq 1$. 
\end{fact}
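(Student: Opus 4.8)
The plan is to prove $\lati^\star \ge 1$ directly, bypassing the explicit closed forms of $\mutis$ and $k$. By Step~1) of the outline, for any fixed $\muti$ the tightest feasible $\lati$ in problem~\eqref{eq:proof:min_alpha_eqvar} is
\[
\lati^\star(\muti)\;=\;\max_{x\in\mathbb{N}_0}\big[(x+1)^d-\muti\,x^{d+1}\big],
\]
and $\lati^\star$ is this quantity evaluated at the optimal $\muti=\mutis$. Since the maximum ranges over all $x\in\mathbb{N}_0$ and $x=0$ is admissible, I would simply lower-bound it by the value at $x=0$, namely $(0+1)^d-\muti\cdot 0^{d+1}=1$, for every $\muti$ and in particular for $\muti=\mutis$. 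Equivalently, the same bound can be read straight off the feasibility constraint of~\eqref{eq:proof:min_alpha_eqvar} by instantiating it at $x=0$, $y=1$ and the monomial $l(z)=z^d$, which yields $1\le\lati\cdot 1+\muti\cdot 0=\lati$.

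With $\lati^\star\ge 1$ in hand, the remaining step is immediate: $\lambda^\star=\frac{\lati^\star+\alphas}{1+\alphas}\ge\frac{1+\alphas}{1+\alphas}=1$, which is exactly what licenses the assumption $\lambda>1$ used when passing from~\eqref{eq:proof:smooth_cg_w_per_cost} to the sufficient condition~\eqref{eq:proof:alpha_smooth}, and hence closes the argument for~\Cref{thm:poa_bounds_improved}. In the extreme case $\lati^\star=1$ one gets $\lambda^\star=1$, and $(\lambda,\mu)$-smoothness with $\lambda=1$ already gives $\text{\poa}\le 1$, so nothing is lost there; in all remaining regimes the inequality is in fact strict.

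The one pitfall I would avoid is attempting the fact through the explicit formula $\lati^\star=(k+1)^d-\mutis\,k^{d+1}$ with $\mutis=\frac{(k+2)^d-(k+1)^d}{(k+1)^{d+1}-k^{d+1}}$: clearing the positive denominator, $\lati^\star\ge 1$ becomes the polynomial inequality $(k+1)^{d+1}\big((k+1)^d-1\big)\ge k^{d+1}\big((k+2)^d-1\big)$, which is true but requires a monotonicity/asymptotic comparison of the two sides plus a separate check of small $k$. That is the only genuine (and still mild) obstacle, and it vanishes once one uses the $x=0$ evaluation above, so that is the version I would commit to, leaving the monomial reduction to the auxiliary lemmas where it is already established.
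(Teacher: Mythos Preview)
Your argument is correct and is genuinely different from the paper's. The paper plugs in the explicit optimizers, writing $\lati^\star=(k+1)^d-\mutis\,k^{d+1}$ with $\mutis=\frac{(k+2)^d-(k+1)^d}{(k+1)^{d+1}-k^{d+1}}$, and then establishes $\lati^\star\ge 1$ by reducing to the monotonicity of $f(x)=x^{d+1}/[(x+1)^d-1]$ on $x>0$, which it verifies via a somewhat laborious derivative computation. Your route---instantiating the feasibility constraint of~\eqref{eq:proof:min_alpha_eqvar} at $x=0$, $y=1$, $l(z)=z^d$, or equivalently evaluating $\max_{x\in\mathbb{N}_0}[(x+1)^d-\muti\,x^{d+1}]$ at $x=0$---yields $\lati\ge 1$ for \emph{every} feasible pair $(\lati,\muti)$ in one line, with no reference to the closed-form $\mutis$ or $k$. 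This is strictly more elementary and more robust (it does not depend on Step~2) at all), and it is exactly the ``pitfall'' you correctly identify and avoid. The paper's approach buys nothing additional for the fact at hand; its only advantage is that the explicit formulas are already in play for the final \poa{} expression, so reusing them costs little narratively.

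One minor inaccuracy in your commentary (not in the proof itself): $(\lambda,\mu)$-smoothness with $\lambda=1$ gives $\text{\poa}\le \frac{1}{1-\mu}$, not $\text{\poa}\le 1$. This does not affect the argument, since $\lambda^\star\ge 1$ is all that is needed to justify the passage from~\eqref{eq:proof:smooth_cg_w_per_cost} to~\eqref{eq:proof:alpha_smooth}, and your proof of that inequality stands.
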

\begin{proof}
   Combining the expressions of $\lati^\star$ and $\muti^\star$ from points 1) and 2) above, it holds $\lati^\star \geq 1$ whenever: 
   \begin{align*}
      & \frac{(k+1)^d - 1}{k^{d+1}}  \geq \frac{(k+2)^d - (k+1)^d}{(k+1)^{d+1} -k^{d+1}}, \quad  \forall d \\
      & \substack{\forall d > 0 \\ \Longleftarrow} \:  \frac{(k+1)^{d+1}}{k^{d+1}}  \geq 1 +  \frac{(k+2)^d - (k+1)^d}{(k+1)^{d} -1} = \frac{(k+2)^d - 1}{(k+1)^{d} -1}  \\
     &  \substack{\forall d > 0 \\ \Longleftrightarrow} \quad \frac{(k+1)^{d+1}}{(k+2)^d - 1} \geq  \frac{k^{d+1}}{(k+1)^{d} -1},
   \end{align*}
   which is satisfied since $f(x) = x^{d+1}/[(x+1)^d -1]$ is monotone for all $x>0, d>0$.
\end{proof}
\section{Experiments}
In this section, we present an experimental case study to empirically assess the possible efficiency gap of various driving scenarios. 
Before presenting our results, we recall that the analytical \poa{} bounds derived in~\Cref{sec:ineff_bounds} hold for open-loop strategies, where the players commit at the beginning of the game to the entire trajectory. 
Due to uncertainty about others and computational limits, a more realistic setup is to consider \emph{feedback} strategies (i.e., policies) for the agents.
While an analytic bound of the \poa{} in the feedback case is more involved and it is for now delegated to future works, in our experiments we consider stochastic feedback policies and provide empirical evidence suggesting that equilibrium policies have a social cost comparable to the socially optimum one.\looseness=-1

\paragraph*{Approximating the \poa{}}
We take instances of driving games and learn feedback policies for the agents using~\gls{marl}~\cite{Zhou2020SMARTS}. 
Since the goal is to observe the \poa{}, we consider both centralized training paradigms ($\sim$social optimum) as well as decentralized ones (self-interested agents, $\sim$equilibria). In the first case, 
we train a joint policy which receives as input the stacked observations of all the agents and outputs the action commands for all of them. To compute other \gls{ne} policies instead, we train individual policies for each agent mapping its observations to a corresponding control input. We can consider the learned policies to represent an equilibrium thanks to the recent results for potential games and gradient-based learning methods, e.g. by~\cite{Mao2022OnLearning,Ding2022IndependentConvergence}.
To approximate the value of the worst equilibrium required by the \poa{}, we run many decentralized trainings to find equilibrium policies that are qualitatively different in the homotopic sense and select the one that performs the worst. In~\Cref{tab:experimental_poas} we indicate with ``\# of comput. policies'' the number of training runs that converged to stable policies (either centralized or decentralized). We highlight that the centralized training paradigm requires significantly more computation, since the joint policy action space grows exponentially with the number of agents. Moreover, it may be infeasible in real-life scenarios since it requires coordination and communication among the agents. Nevertheless, we consider it here as an ideal performance benchmark to reach socially optimal outcomes and thus obtain approximate PoAs.

\subsection{Experimental Setup}
All the experiments were conducted in the \gls{smarts} framework~\cite{Zhou2020SMARTS}. 
We benchmark two scenarios (intersection and merging) with a varying number of players ($2-4$).
\paragraph*{Observation and Action Spaces}
The observation space for each agent is a stacked vector including its own state and the one of all neighboring vehicles within a 50 meter radius.
More specifically, each state includes the position relative to the goal, the distance to the center of the current lane, the speed, the steering angle, and a list of heading errors. Moreover, it also includes the stacked states of the two most recent time steps.

The action space of each agent consists of four high level actions at each time step, namely $\{$\texttt{Keep Lane}, \texttt{Slow Down}, \texttt{Change Lane Left}, \texttt{Change Lane Right}$\}$. The low-level control is handled by the lane following controller implemented in \gls{smarts}~\cite{Zhou2020SMARTS}.

\paragraph*{Cost/Reward Function}
The rewards of the individual agents are comprised of proximity costs and personal objectives. 
The proximity costs penalize, for each time step, the agents' distance to nearby cars and are computed as in~\eqref{eq:clear_cost_ds}.
Moreover, to ensure that the agents remain on the road and that they do not crash, we associate one-time negative rewards to these events. 
The personal objectives consist of a constant cost for each time step the agent takes to complete the mission which encourages faster completion, together with a comfort cost penalizing quadratically high accelerations. 
Moreover, agents receive a one-time positive reward for reaching the defined destination. Finally, we experienced that adding a small reward at each time step when the agent decreases the overall distance to the goal, fostered faster convergence of the learned policies. 
\paragraph*{Training}
The training of the agents' policies is carried out using Proximal Policy Optimization (PPO) algorithm both  centralized and decentralized, with Adam optimizer.

\begin{figure}[t]
\centering
\begin{subfigure}{.5\linewidth}
  \centering
  \includegraphics[scale=.17]{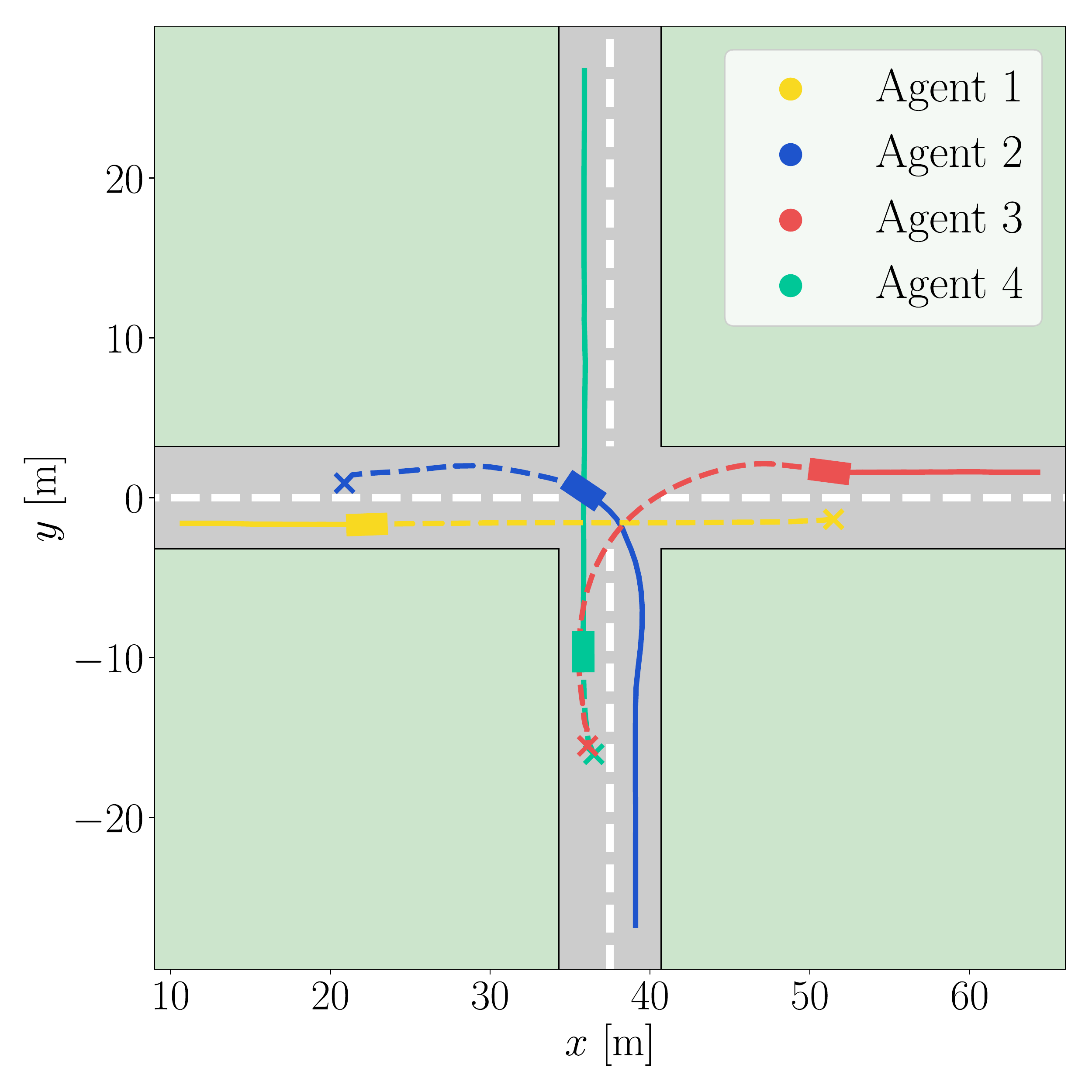}
\end{subfigure}%
\begin{subfigure}{.5\linewidth}
  \centering
  \includegraphics[scale=.17]{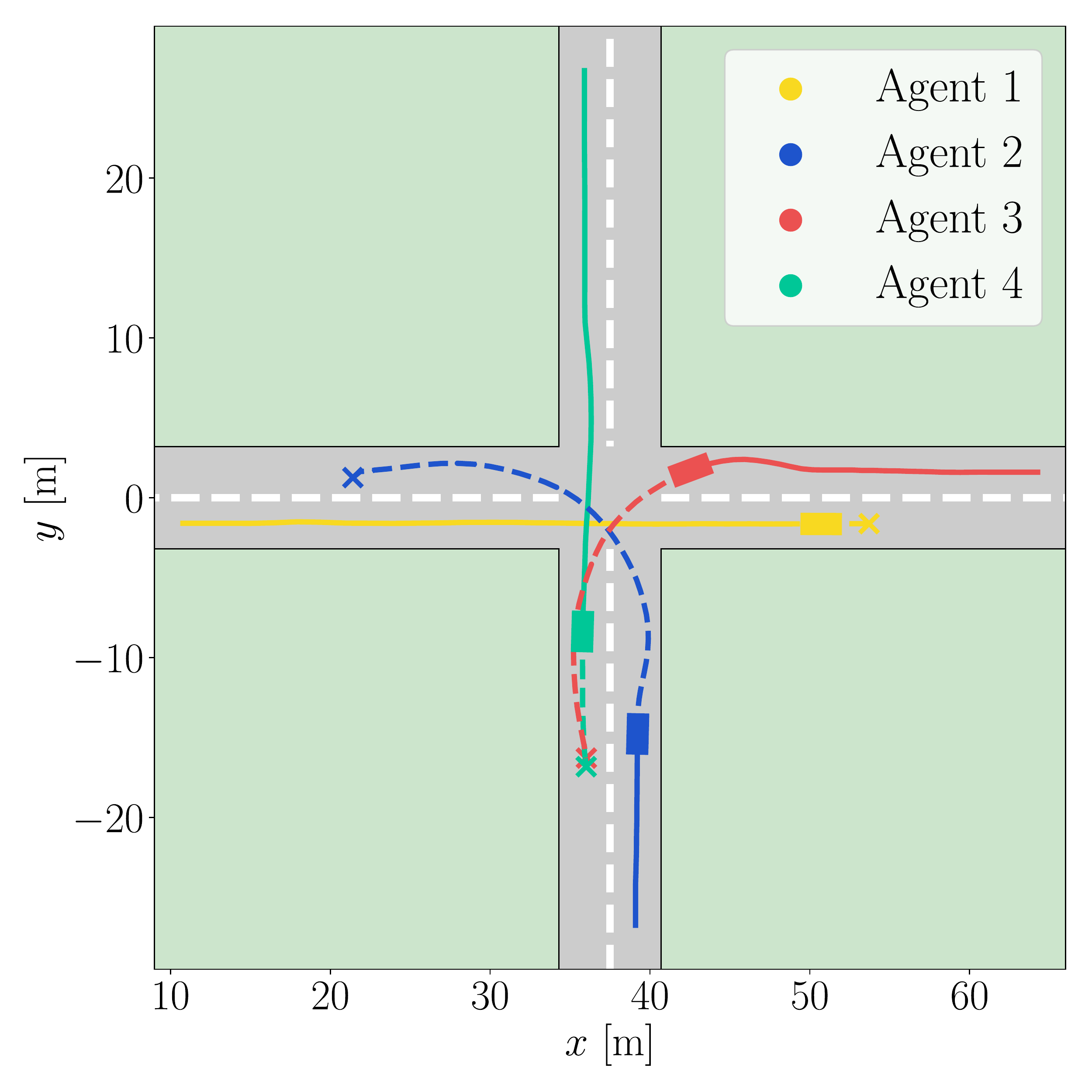}
\end{subfigure}
\vspace{0.1em}
\caption{
Different equilibrium policies learned in the $4$-player case. 
Each plot shows the average resulting behavior from trained policies using the decentralized paradigm.
As the policies are stochastic, the averaging has been done over $200$ game realizations.
The past time steps are drawn with a solid line while future time steps are visualized by the dashed lines.
One can observe that some learned equilibrium policies are more efficient than others.
In the left plot, the learned equilibrium is close to the minimum of the social cost, whereas other training runs converge to more inefficient equilibria (right plot). One can clearly observe how the different learned driving ``culture'' result in a different order in which the vehicles cross the intersection.
Similar results were obtained also in the merging scenario of~\Cref{fig:driving_game}.
}
\label{fig:intersection}
\end{figure}

\begin{table}[t]
	\centering
	\begin{threeparttable}
		\setlength\tabcolsep{0pt} %
		\captionsetup{labelsep=period, skip=5pt}
		\caption{Summary Results}
    	\label{tab:experimental_poas}
		\begin{tabular*}{\columnwidth}{l@{\extracolsep{\fill}}*{7}{c}}
			\toprule
			Scenario  & \multicolumn{3}{c}{\textbf{Intersection}}  & \phantom{a}& \multicolumn{3}{c}{\textbf{Merging}}  \\
			\cmidrule{2-4} \cmidrule{6-8}
		      \# of players	& 2&3&4 && 2&3&4 \\ \midrule
                \# of comput. policies & 8& 6& 5&& 10& 10& 7\\
			\textbf{Observed \poa{}} & 1.28 & 1.22 & 1.18 && 1.41 & 1.16 & 1.27  \\
			\bottomrule
		\end{tabular*}
	\end{threeparttable}
\end{table}

\subsection{Results and Discussion}
For each driving scenario and training run, we inspect convergence (in terms of social cost) and take the corresponding agents' policies to represent an equilibrium (in case of decentralized training) or a social local optimum (when using centralized training). PoA is computed using the worst-performing equilibrium policy and the socially optimal one observed. Because policies are stochastic, PoA is computed as the average over 200 evaluations of such policies.

Interestingly, we observe that from multiple decentralized training runs we obtain different ``driving cultures'' for the agents. Two concrete examples are shown in~\Cref{fig:intersection} for the intersection scenario, where different training runs delivered different orders in which the vehicles learn to give each other the right of way. For the merging scenarion, two different policies are visualized in~\Cref{fig:driving_game}. 
These qualitatively different equilibria can be formally described at the topological level with braid's theory and homotopic classes (see for an overview the topological models section in~\cite{Wang2022SocialPerspectives}).
More quantitatively, in~\Cref{tab:experimental_poas}
we report the number of observed NEs for the considered scenarios and number of players, toghether with their corresponding observed PoAs.
Even though the observed PoAs underestimate of the real ones (which are not feasible to compute as they require computing all possible NEs), all observed equilibria are quite efficient according to our observed outcomes and display a PoA $<$ 1.5. 
This suggests that agents in a driving game can reach very efficient outcomes (i.e., NE policies) in a decentralized fashion via independent learning, without employing centralized and/or complex communication protocols. 
In other words, the trained centralized policy leads to marginal improvements in terms of social cost, albeit requiring a significantly higher computational complexity (scales exponentially with the number of agents) as well as agents' coordination and communication. We further notice that the observed PoAs are lower than their bounds of~\Cref{sec:dg_as_cg}.
Besides the mismatch between theory and experiments, this is also expected since PoA guarantees are indeed robust. They apply to any game in such a class, and to any equilibrium and therefore can be overly conservative in practice (see, e.g. \cite{Roughgarden2015, Sessa2019b}).

\section{Conclusions and Outlook}
We have considered the problem of bounding the inefficiency of equilibria in driving games. To this end, we showed that such games can be formulated as a particular type of congestion games and that this allows obtaining rigorous novel PoA bounds as a function of game-dependent parameters. Finally, we considered various driving scenarios and reported empirical evidence on the efficiency of equilibrium policies computed via decentralized \gls{marl}.

The obtained PoA bounds are the first of their kind in the robotics literature and they open-up interesting related research questions.   
First, there is still quite a gap between our theoretical and experimental setup, yet the reported evidence suggests that PoA bounds could perhaps be derived for such more complex case. Second, in line with our theoretical bounds of \Cref{sec:ineff_bounds}, it would be interesting to observe how the empirical PoAs change as a function of game-dependent parameters, albeit this requires significant computational resources. Finally, it would be meaningful to study whether the obtained PoA bounds are tight; this has been shown for general congestion games~\cite{Aland2011Exact}, but it is obvious if this applies to our specific driving setup too.

\bibliographystyle{IEEEtran}
\bibliography{references_mendeley}
\appendix
\label{appendix_proof}

In this section, we prove the refined PoA guarantees  of~\cref{thm:poa_bounds_improved}. The overall proof follows the same methodology as the proof of~\cite[Thm. 7]{Aland2011Exact} but the intermediate steps have to be carefully adjusted to: 1) include agents' personal costs and 2) exploit the lower bounding constant $\alphas$ of~\cref{alpha_star_definition}. In what follows, we leave out the ``cg''-superscript in the strategies $\strategy$ in favor of a simpler notation. Moreover, we let $\mathcal{P}_d$ be the set of polynomials up to degree $d$.\looseness=-1

\subsection{Game smoothness}
The obtained guarantees utilize the following notion of \emph{smoothness} from~\cite{Roughgarden2015}.
\begin{definition}[$(\lambda, \mu)$-smooth game~\cite{Roughgarden2015}]
A game is $(\lambda, \mu)$-\emph{smooth} $(\lambda > 0, \mu < 1)$ if for every pair $\strategy',\strategy$ of outcomes,
\begin{equation}
    \sum_{i\in \players} \cost_i(\strategy_i',\strategy_{-i}) \leq \lambda\socialcost(\strategy') +\mu\socialcost(\strategy).
    \label{eq:smooth}
\end{equation}
Smoothness is a widely adopted condition to prove PoA bounds, according to the following theorem.
\end{definition}
\begin{fact}[Section 2.1 of \cite{Roughgarden2015}]
\label{thm:smooth_PoA_bound}
    If a game is $(\lambda, \mu)$-\emph{smooth}, then the \poa{} is upper-bounded by $\frac{\lambda}{1-\mu}$.
\end{fact}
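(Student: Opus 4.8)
The plan is to run the by-now-standard ``smoothness implies PoA'' argument. First I would fix an arbitrary pure Nash equilibrium $\strategy \in \strategyset_\textrm{NE}$ and a socially optimal profile $\strategy^\star \in \arg\min_{\strategy\in\strategyset} \socialcost(\strategy)$. Invoking the NE condition of~\Cref{def:PNE} player by player with the unilateral deviation to $\strategy_i^\star$, I get $\cost_i(\strategy) \leq \cost_i(\strategy_i^\star, \strategy_{-i})$ for every $i\in\players$, and summing over $i$ yields
\begin{equation*}
    \socialcost(\strategy) = \sum_{i\in\players} \cost_i(\strategy) \leq \sum_{i\in\players} \cost_i(\strategy_i^\star, \strategy_{-i}).
\end{equation*}

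Next I would apply the $(\lambda,\mu)$-smoothness inequality~\eqref{eq:smooth} with the roles $\strategy' \coloneqq \strategy^\star$ and with the equilibrium $\strategy$ in the second slot, giving $\sum_{i} \cost_i(\strategy_i^\star, \strategy_{-i}) \leq \lambda\, \socialcost(\strategy^\star) + \mu\, \socialcost(\strategy)$. Chaining this with the previous display produces $\socialcost(\strategy) \leq \lambda\,\socialcost(\strategy^\star) + \mu\,\socialcost(\strategy)$, and since $\mu < 1$ I can rearrange to $(1-\mu)\,\socialcost(\strategy) \leq \lambda\,\socialcost(\strategy^\star)$, i.e. $\socialcost(\strategy)/\socialcost(\strategy^\star) \leq \lambda/(1-\mu)$ (the case $\socialcost(\strategy^\star)=0$ forcing $\socialcost(\strategy)=0$ and being handled separately, or excluded by positivity of costs). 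Because $\strategy \in \strategyset_\textrm{NE}$ was arbitrary, taking the maximum over all NE on the left gives exactly $\text{\poa} \leq \lambda/(1-\mu)$.

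There is no serious obstacle here; the only points requiring a little care are (i) applying smoothness \emph{in the correct direction} — the deviating profile $\strategy'$ must be the optimum and the ``reference'' profile must be the equilibrium — and (ii) noting, as remarked after~\eqref{eq:proof:smooth} in the main text, that for bounding the PoA of \emph{pure} NE it in fact suffices for~\eqref{eq:smooth} to hold only for $\strategy' \in \strategyset_\textrm{NE}$ and $\strategy \in \strategyset^\star$, which is precisely the weaker form exploited later to bring in the constant $\alphas$. I would state the argument for this weaker hypothesis so that it is directly reusable in the proof of~\Cref{thm:poa_bounds_improved}.
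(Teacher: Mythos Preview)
Your argument is the standard smoothness-to-PoA derivation and is correct; the paper itself does not prove this fact but simply cites it from \cite{Roughgarden2015}, so there is nothing to compare against.

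One small inconsistency worth cleaning up: in point~(i) you correctly identify that the deviating profile $\strategy'$ in~\eqref{eq:smooth} must be the \emph{optimum} and the reference $\strategy$ the \emph{equilibrium} (this is exactly how you used it in the display). But in point~(ii) you then restate the weaker hypothesis as ``$\strategy' \in \strategyset_\textrm{NE}$ and $\strategy \in \strategyset^\star$'', which swaps the roles and contradicts~(i). You are simply quoting the paper's own phrasing in~\Cref{remark:weak_smoothness}, which appears to have the labels reversed; the version consistent with your proof (and with the standard argument) is that it suffices for~\eqref{eq:smooth} to hold whenever $\strategy' \in \strategyset^\star$ and $\strategy \in \strategyset_\textrm{NE}$. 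Since the $\alphas$ condition of~\Cref{alpha_star_definition} is stated for $\strategyset_\textrm{NE}\cup\strategyset^\star$ anyway, the downstream use in~\Cref{thm:poa_bounds_improved} is unaffected, but you should state the weaker hypothesis in the orientation that actually matches your derivation.
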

\begin{remark}\label{remark:weak_smoothness}
As noted by \cite{Roughgarden2015}, to bound the PoA of pure Nash equilibria (which is the notion of PoA that we consider in this work -- see~\Cref{def:poa}), the bound of \Cref{thm:smooth_PoA_bound} holds even when the smoothness condition~\eqref{eq:smooth} is only satisfied for $\strategy' \in \strategyset_\textrm{NE}$ and $\strategy = \arg\min_{\gamma\in \strategyset} \socialcost(\strategy)$ (hence, not for all strategy pairs). We will make use of such weaker condition to exploit the factor $\alphas$ of~\Cref{alpha_star_definition}.
\end{remark}

\subsection{Auxiliary lemmas}
We can use the above results to prove the following intermediate lemma.
\begin{lemma}
Consider constant $\alphas$ defined in~\Cref{alpha_star_definition}. Then, the PoA of our congestion driving game (with added personal cost) is upper bounded by:
\begin{equation}
\begin{aligned}
\min_{(\lambda,\mu) \in \mathbb{R}_+\times (0,1)} \quad & \frac{\lambda}{1-\mu}\\
\textrm{s.t.} \quad \quad \quad \, &  y \cdot \cost(x+1) \leq ((1+\alphas)\lambda-\alphas) \cdot y \cdot \cost(y)\\
&\quad \quad \qquad \qquad \! +(1+\alphas)\mu \cdot x \cdot \rescost(x)\\
  &\alphas \in [0, \infty), \forall x,y \in \mathbb{N}_0, \cost\in \mathcal{P}_d  \\
\end{aligned}
\label{eq:min_alpha}
\end{equation}
\begin{proof}
We begin by plugging the player costs into the smoothness condition~(\ref{eq:smooth}) which, in light of~\Cref{remark:weak_smoothness}, is required to hold only for $\strategy' \in \strategyset_\textrm{NE}$ and $\strategy \in \arg\min_{\gamma\in \strategyset} \socialcost(\strategy)$:
\begin{equation}
\begin{aligned}
    \sum_{i}\costcg_i&(\strategy_i',\strategyother) + \sum_{i}\costper_i(\strategy_i')  \leq \\ &\lambda \cdot \left( \sum_{i}\costcg_i(\strategy') +\sum_{i}\costper_i(\strategy_i')\right) \\&+ \mu \cdot \left(\sum_{i}\costcg_i(\strategy)+ \sum_{i}\costper_i(\strategy_i)\right).
    \label{eq:smooth_cg_w_per_cost}
\end{aligned}
\end{equation}

Let us now assume that $\lambda>1$ (we will prove in Fact~\ref{fact:lambdagreaterthan1} that this is without loss of generality). Then, 
by using~\cref{alpha_star_definition}, condition~\eqref{eq:smooth_cg_w_per_cost} is satisfied whenever

\begin{equation}
\begin{aligned}
    \sum_{i}\costcg_i(\strategy_i',\strategyother)  \leq &  ((1+\alphas)\lambda -\alphas)\cdot \sum_{i}\costcg_i(\strategy') \\&+ (1+\alphas)\mu \cdot \sum_{i}\costcg_i(\strategy).
    \label{eq:alpha_smooth}
\end{aligned}
\end{equation}

Moreover, since $\sum_{i}\costcg_i(\strategy) = \sum_{i}\sum_{\resource\in\resourceset} \load_\resource^i(\strategy_i) \cdot \rescost_\resource(\load_\resource(\strategy)) = \sum_{\resource\in\resourceset} \load_\resource(\strategy) \cdot \rescost_\resource(\load_\resource(\strategy))$,
the left hand side of Ineq.~(\ref{eq:alpha_smooth}) can be reformulated into
\begin{equation}
\begin{aligned}
    &\sum_{i}\costcg_i(\strategy_i',\strategyother) =\\  \sum_i\sum_{\resource\in\resourceset} \load_\resource^i(&\strategy_i') \cdot \rescost_\resource(\load_\resource(\strategy_i, \strategyother) - \load_\resource^i(\strategy_i) + \load_\resource^i(\strategy_i'))
    \label{eq:cg_total_cost}
\end{aligned}
\end{equation}
with $\load_\resource(\strategy_i, \strategyother) - \load_\resource^i(\strategy_i) + \load_\resource^i(\strategy_i') = \load_\resource(\strategy_i', \strategyother)$ being the total load on resource $\resource$ given that player $i$ unilaterally deviated from $\strategy_i$ to $\strategy_i'$. Since a deviation by a single player could at maximum mean that the load on resource $\resource$ increases by one, we can formulate an upper bound on~\eqref{eq:cg_total_cost}:
\begin{equation}
    \sum_{i}\costcg_i(\strategy_i',\strategyother) \leq \sum_{\resource\in\resourceset} \load_\resource(\strategy') \cdot \rescost_\resource(\load_\resource(\strategy) + 1).
    \label{eq:upper_bound_cg_cost}
\end{equation}
This leads us to an other sufficient condition of smoothness by combining~\eqref{eq:upper_bound_cg_cost} and~\eqref{eq:alpha_smooth}:

\begin{equation}
\begin{aligned}
\sum_{\resource\in\resourceset} &\load_\resource(\strategy') \cdot \rescost_\resource(\load_\resource(\strategy) + 1) \leq \\ &
\sum_{\resource\in\resourceset}\Big[((1+\alphas)\lambda -\alphas)\cdot \load_\resource(\strategy') \cdot \rescost_\resource(\load_\resource(\strategy')) \\& \qquad + (1+\alphas)\mu \cdot \load_\resource(\strategy) \cdot \rescost_\resource(\load_\resource(\strategy))\Big].
\end{aligned}
\end{equation}

Thus, due to~\Cref{thm:smooth_PoA_bound} and since $\load_\resource(\strategy')$ and $\load_\resource(\strategy)$ are both in $\mathbb{N}_0$, finding a \poa{} upper bound for our driving game can be formulated as finding a solution to the minimization problem~\eqref{eq:min_alpha}.
\end{proof}
\end{lemma}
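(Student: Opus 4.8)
The plan is to run the argument through the $(\lambda,\mu)$-smoothness framework of \cite{Roughgarden2015}: by \Cref{thm:smooth_PoA_bound} it suffices to exhibit constants $\lambda>0$, $\mu<1$ satisfying the smoothness inequality~\eqref{eq:smooth}, after which the PoA is at most $\lambda/(1-\mu)$; and by \Cref{remark:weak_smoothness} I only need~\eqref{eq:smooth} to hold for $\strategy'\in\strategyset_\textrm{NE}$ and $\strategy$ a social optimum, which is exactly the regime where \Cref{alpha_star_definition} applies. So the statement reduces to rewriting ``minimize $\lambda/(1-\mu)$ over the admissible pairs $(\lambda,\mu)$'' in the resource-level form~\eqref{eq:min_alpha}.

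First I would substitute the additive decomposition $\cost_i=\costcg_i+\costper_i$ into~\eqref{eq:smooth}, keeping congestion and personal contributions separate on both sides. Working under the temporary hypothesis $\lambda>1$ (its harmlessness is isolated into Fact~\ref{fact:lambdagreaterthan1}), I would observe that the right-hand personal mass $\lambda\sum_i\costper_i(\strategy_i')+\mu\sum_i\costper_i(\strategy_i)$ already absorbs the left-hand personal term $\sum_i\costper_i(\strategy_i')$, leaving a nonnegative surplus $(\lambda-1)\sum_i\costper_i(\strategy_i')+\mu\sum_i\costper_i(\strategy_i)$. Since $\strategy'$ is a NE and $\strategy$ a social optimum, \Cref{alpha_star_definition} gives $\sum_i\costper_i(\strategy_i')\ge\alphas\sum_i\costcg_i(\strategy')$ and the analogous bound at $\strategy$, so this surplus can be traded for $(\lambda-1)\alphas\sum_i\costcg_i(\strategy')+\mu\alphas\sum_i\costcg_i(\strategy)$. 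Collecting terms yields the purely congestion-game sufficient condition~\eqref{eq:alpha_smooth}, in which $\lambda$ and $\mu$ enter only through the rescaled coefficients $(1+\alphas)\lambda-\alphas$ and $(1+\alphas)\mu$ weighting the congestion costs of $\strategy'$ and $\strategy$.

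Next I would pass from players to resources. Using $\sum_i\costcg_i(\strategy)=\sum_{\resource\in\resourceset}\load_\resource(\strategy)\,\cost_\resource(\load_\resource(\strategy))$, the fact that a unilateral deviation raises any resource load by at most one, and monotonicity of the non-negative-coefficient polynomials $\cost_\resource$, I would bound the deviation term $\sum_i\costcg_i(\strategy_i',\strategyother)$ by $\sum_{\resource}\load_\resource(\strategy')\,\cost_\resource(\load_\resource(\strategy)+1)$. Inequality~\eqref{eq:alpha_smooth} then follows from the per-resource bounds $y\cdot\cost(x+1)\le((1+\alphas)\lambda-\alphas)\,y\,\cost(y)+(1+\alphas)\mu\,x\,\cost(x)$ taken at $x=\load_\resource(\strategy)$, $y=\load_\resource(\strategy')$; quantifying over all $x,y\in\mathbb{N}_0$ and all $\cost\in\mathcal{P}_d$ so as to be uniform over the resource costs of the game gives exactly the constraint set of~\eqref{eq:min_alpha}, and minimizing $\lambda/(1-\mu)$ over the pairs satisfying it is the claimed bound.

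I expect the delicate point to be the personal-cost bookkeeping rather than the resource-level estimates: one must keep track that the $\alphas$ substitution is licensed only because \Cref{remark:weak_smoothness} restricts $\strategy,\strategy'$ to the NE/optimum set appearing in \Cref{alpha_star_definition}, and that discarding the left-hand personal term genuinely uses only $\lambda>1$ (hence that assumption is quarantined into a separate fact rather than invoked loosely). The monotonicity-plus-at-most-one-increase estimate and the reduction to a single per-resource inequality are then routine.
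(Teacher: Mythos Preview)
Your proposal is correct and follows essentially the same route as the paper's proof: write out the smoothness inequality with the $\costcg+\costper$ decomposition, use the weak-smoothness restriction to NE/optimum so that \Cref{alpha_star_definition} applies, absorb the personal terms under the assumption $\lambda>1$ (deferred to Fact~\ref{fact:lambdagreaterthan1}) to obtain~\eqref{eq:alpha_smooth}, then pass to resource loads via the standard ``deviation raises load by at most one'' bound to reach the per-resource constraint of~\eqref{eq:min_alpha}. Your surplus bookkeeping and explicit invocation of monotonicity of the $\cost_\resource$ make the $\alphas$-absorption and the load bound slightly more transparent than in the paper, but the argument is the same.
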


 With the intent of obtaining an explicit PoA bound, we note that the minimization problem~\eqref{eq:min_alpha} can be brought to the following equivalent form with $\lati \defeq ((1+\alphas)\lambda-\alphas)$ and $\muti \defeq (1+\alphas)\mu$:
\begin{equation}
\begin{aligned}
\min_{\substack{\lati \in \mathbb{R} \\ \muti \in (0,1 +\alphas) }} \quad & \frac{\lati +\alphas }{1-\muti +\alphas }\\
\textrm{s.t.} \quad \quad \, &  y \cdot l(x+1) \leq \lati \cdot y \cdot \cost(y)+\muti \cdot x \cdot l(x)\\
  &\forall x,y \in \mathbb{N}_0, \cost\in \mathcal{P}_d  . \\
\end{aligned}
\label{eq:min_alpha_eqvar}
\end{equation}
 Moreover, the following three main Lemmas can be obtained similarly to~\cite{Aland2011Exact}. We will use the notation $[n]_0\defeq\{0,1,\hdots, n\}$.

 \begin{lemma}[Adaptation of Lemma 5.1 from~\cite{Aland2011Exact}]
    Let $\muti \in (0,\infty)$ and $x \in \mathbb{R}_{\geq 0}$. Define $g:\mathbb{R}_{\geq 0} \rightarrow \mathbb{R}$, $g(x) \defeq (x+1)^r-\muti\cdot x^{r+1}$, then it holds for all $d,r \in \mathbb{R}_{\geq 0}$ with $d > r$ and $g(r) \geq 0$ that $g(d) \geq g(r)$.
    \label{lemma:51}
\end{lemma}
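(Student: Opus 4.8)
The plan is to view $g$ as a function of its exponent, with $x \in \mathbb{R}_{\geq 0}$ and $\muti \in (0,\infty)$ held fixed; that is, I write $g(s) = (x+1)^s - \muti\, x^{s+1}$ for $s \geq 0$, so the claim is that $g(d) \geq g(r)$ whenever $d > r$ and $g(r) \geq 0$. I would distinguish the cases $0 \le x \le 1$ and $x > 1$.

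If $0 \leq x \leq 1$, no hypothesis on $g(r)$ is needed. The map $s \mapsto (x+1)^s$ is non-decreasing because its base $x+1$ lies in $[1,2]$, and the map $s \mapsto x^{s+1}$ is non-increasing because its base $x$ lies in $[0,1]$, so $s \mapsto -\muti\, x^{s+1}$ is non-decreasing. Hence $g$ is a sum of two non-decreasing functions of $s$, and $g(d) \geq g(r)$ follows at once from $d > r$. (For $x = 0$ one simply has $g \equiv 1$, consistent with the statement.)

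The case $x > 1$ is where the actual work lies, since now both $(x+1)^s$ and $x^{s+1}$ grow with $s$ and monotonicity of $g$ is no longer automatic; this is where the assumption $g(r) \geq 0$ must enter. Here I would factor $g(s) = x^s\bigl(q^s - \muti\, x\bigr)$ with $q \defeq \tfrac{x+1}{x} = 1 + \tfrac1x > 1$. Since $x^r > 0$, the hypothesis $g(r) \geq 0$ is equivalent to $q^r \geq \muti\, x$. Because $q > 1$ and $d > r$, we get $q^d \geq q^r \geq \muti\, x$, hence $q^d - \muti\, x \geq q^r - \muti\, x \geq 0$; and because $x > 1$ we get $x^d \geq x^r > 0$. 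Chaining, $g(d) = x^d(q^d - \muti\, x) \geq x^r(q^d - \muti\, x) \geq x^r(q^r - \muti\, x) = g(r)$, where the first inequality uses $x^d \geq x^r$ together with $q^d - \muti\, x \geq 0$, and the second uses $q^d - \muti\, x \geq q^r - \muti\, x$ together with $x^r \geq 0$.

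The only non-routine point — and the step I expect to be the main obstacle — is recognizing that the naive ``$g$ is increasing in $s$'' argument fails exactly when $x > 1$, and that in that regime the nonnegativity $g(r) \geq 0$ is precisely what forces the growing term $x^d q^d$ to dominate the growing subtracted term $\muti\, x^{d+1}$; the factorization by $x^s$ is what makes this visible. Everything else is elementary monotonicity of real exponentials.
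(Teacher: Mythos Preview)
Your proof is correct. Your reading of the (admittedly confusing) statement --- treating $g$ as a function of the exponent with $x$ and $\muti$ fixed --- is the intended one, as confirmed by how the lemma is invoked later in the proof of \Cref{lemma:58}. The case split at $x=1$ is natural, and the factorization $g(s)=x^s\bigl(q^s-\muti x\bigr)$ with $q=(x+1)/x$ in the $x>1$ regime cleanly isolates exactly why the hypothesis $g(r)\geq 0$ is needed there.

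As for comparison with the paper: the paper does not give its own argument for this lemma but simply defers to the proof of Lemma~5.1 in \cite{Aland2011Exact}, noting only that the extension to $\muti>1$ goes through unchanged. Your self-contained elementary argument is therefore a genuine addition rather than a re-derivation, and is arguably preferable here since it avoids sending the reader to an external reference for a short calculation.
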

 %

\iffalse
\nk{I excluded the following Lemma, because it would play a role in the proof of~\cref{claim2}, but we also omitted the proof to this claim.}

 %
\begin{lemma}[adaptation of Lemma 5.2 from~\cite{Aland2011Exact}]
    Let $\muti \in (0,\infty)$, $d \in \mathbb{N}$ and $\alphas \in [0, \infty)$. Define $g:\mathbb{R}_{\geq 0} \rightarrow \mathbb{R}$, $g(r) \defeq (x+1)^r-\muti\cdot x^{r+1} + \alphas$, then it holds that $g$ has exactly one local maximum at some $\xi \in \mathbb{R}_{\geq 0}$. Moreover, $g$ is strictly increasing in $[0, \xi)$ and strictly decreasing in $(\xi, \infty)$.
\end{lemma}
 %
 
 We omit the proofs for~\cref{lemma:51} and \cref{lemma:52} since the proof of \cite{[Lemma 5.2]Aland2011Exact} also works for our slight adaptation, i.e. additionally considering $\muti > 1$.
 %
\fi

 We omit the proof for~\cref{lemma:51} since the proof of \cite[Lemma 5.1]{Aland2011Exact} also works for our slight adaptation, i.e. additionally considering $\muti > 1$.

\begin{lemma}[Adaptation of Lemma 5.7 from~\cite{Aland2011Exact}]
    Let $d \in \mathbb{N}_0$. Then it holds for all $\muti \in (0,\infty)$ that 
    \begin{equation}
    \begin{aligned}
        &\max_{x\in \mathbb{N}_0, y \in \mathbb{N}}{\left\{\left(\frac{x+1}{y}\right)^d-\muti\cdot\left(\frac{x}{y}\right)^{d+1}\right\}} \\&= \max_{x\in \mathbb{N}_0}{\left\{(x+1)^d-\muti\cdot x^{d+1}\right\}}.
    \end{aligned}
    \label{eq:57}
    \end{equation}
    \begin{proof}
    By~\cite[Lemma 5.2]{Aland2011Exact} the maximum of the right hand side of~\eqref{eq:57} exists and is unique.
    We define $g: \mathbb{N}_0 \times \mathbb{N}\times [0,\infty) \rightarrow \mathbb{R}$ as
    \begin{equation*}
    g(x,y,\muti)\defeq \left(\frac{x+1}{y}\right)^d -\muti\cdot\left(\frac{x}{y}\right)^{d+1}. \end{equation*}
    Now we show that, $\forall x\in\mathbb{N}_0, y\in\mathbb{N}, \exists \hat{x}\in \mathbb{N}_0$ such that $\forall \muti\in[0,\infty)$,
    \begin{equation*}
        g(\hat{x}, 1, \muti) \geq g(x,y,\muti).
    \end{equation*}
    If $y\geq x+1$, we have $\forall\muti\in[0,\infty)$ that $g(0,1,\muti) = 1\geq g(x,y,\muti)$, meaning that $0$ is an appropriate choice for $\hat{x}$. Therefore, we can only consider the case where $y\leq x$.\\
    Define
    \begin{equation*}
        \hat{x} \defeq \ceil[\bigg]{\frac{x+1-y}{y}}
    \end{equation*}
     Because of $x\geq y$, $x$ can be written as $b_1\cdot y+b_2$ for some $b_1\in\mathbb{N}$ and $b_2\in [y-1]_0$. This shows that
    \begin{equation*}
        \hat{x} = b_1 -1 + \ceil[\bigg]{\frac{b_2+1}{y}} = b_1 = \floor[\bigg]{\frac{x}{y}}
    \end{equation*}
    Now it holds $\forall \muti \in [0,\infty)$ that 
    \begin{equation*}
    \begin{aligned}
        g(\hat{x}, 1, \muti) &= \left(\ceil[\bigg]{\frac{x+1-y}{y}} +1\right)^d - \muti\cdot \floor[\bigg]{\frac{x}{y}}^{d+1} \\&=  \ceil[\bigg]{\frac{x+1}{y}}^d - \muti\cdot \floor[\bigg]{\frac{x}{y}}^{d+1} \\
        &\geq \left(\frac{x+1}{y}\right)^d - \muti \cdot \left(\frac{x}{y}\right)^{d+1} = g(x,y,\muti)
    \end{aligned}
    \end{equation*}
    which proves the Lemma.
    \end{proof}
    \label{lemma:57}
\end{lemma}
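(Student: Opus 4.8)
The plan is to prove the claimed equality by establishing the two inequalities separately. The direction ``$\ge$'' is immediate: restricting the left-hand maximum to $y=1$ already yields every term $(x+1)^d-\muti x^{d+1}$ appearing on the right-hand side, so the left-hand supremum is at least the right-hand one. Before doing anything, I would invoke~\cite[Lemma 5.2]{Aland2011Exact}, which is stated for all $\muti\in(0,\infty)$ and hence also covers the case $\muti>1$, to conclude that the maximum over $x\in\mathbb{N}_0$ on the right-hand side is attained and unique, so that both sides are well defined.

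For the direction ``$\le$'' the goal is to show that for every $x\in\mathbb{N}_0$ and $y\in\mathbb{N}$ there exists $\hat x\in\mathbb{N}_0$ with
\[
(\hat x+1)^d-\muti\,\hat x^{d+1}\;\ge\;\Big(\tfrac{x+1}{y}\Big)^d-\muti\Big(\tfrac{x}{y}\Big)^{d+1}
\]
for \emph{every} $\muti\in(0,\infty)$ simultaneously; taking the maximum over $(x,y)$ on the right and then over $\hat x$ on the left then gives the claim. The construction of $\hat x$ splits into two cases. If $y\ge x+1$, then $(x+1)/y\le 1$ and $(x/y)^{d+1}\ge 0$, so the right-hand quantity is at most $1=(0+1)^d-\muti\cdot 0^{d+1}$ and $\hat x:=0$ works. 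If instead $y\le x$, I would set $\hat x:=\ceil{(x+1-y)/y}$ and verify, by writing $x=b_1 y+b_2$ with $b_1\in\mathbb{N}$ and $b_2\in\{0,\dots,y-1\}$, the two identities $\hat x=\floor{x/y}$ and $\hat x+1=\ceil{(x+1)/y}$. Since $t\mapsto t^d$ is nondecreasing on $\mathbb{R}_{\ge 0}$ this gives $(\hat x+1)^d\ge((x+1)/y)^d$, and since $t\mapsto t^{d+1}$ is nondecreasing it gives $\hat x^{d+1}\le(x/y)^{d+1}$, hence $-\muti\,\hat x^{d+1}\ge-\muti\,(x/y)^{d+1}$; adding the two inequalities yields the displayed bound for all $\muti>0$.

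The only delicate point is the integer arithmetic in the case $y\le x$: one must check that $\ceil{(x+1-y)/y}=b_1-1+\ceil{(b_2+1)/y}$ and that $1\le b_2+1\le y$ forces $\ceil{(b_2+1)/y}=1$, which pins down $\hat x=b_1$ and therefore $\hat x+1=b_1+1=\ceil{(x+1)/y}$. Everything else is just monotonicity of powers on $\mathbb{R}_{\ge 0}$, so I do not expect a genuine obstacle. The only difference from~\cite[Lemma 5.7]{Aland2011Exact} is that $\muti$ is now allowed to exceed $1$, but the argument above never uses $\muti<1$, so it carries over verbatim; likewise the appeal to~\cite[Lemma 5.2]{Aland2011Exact} for well-definedness of the right-hand maximum remains valid in that wider range.
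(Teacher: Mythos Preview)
Your proposal is correct and follows essentially the same approach as the paper's proof: the same case split on $y\ge x+1$ versus $y\le x$, the same choice $\hat x=\ceil{(x+1-y)/y}$ in the second case, and the same verification via $x=b_1y+b_2$ that $\hat x=\floor{x/y}$ and $\hat x+1=\ceil{(x+1)/y}$, followed by monotonicity of powers. The only additions are that you make the trivial direction ``$\ge$'' explicit and spell out the monotonicity argument a bit more, but the argument is otherwise identical to the paper's.
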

\begin{lemma}[Adaptation of Lemma 5.8 from~\cite{Aland2011Exact}]
    The minimization problem~(\ref{eq:min_alpha_eqvar}) has the same solution as 
    \begin{equation}
        \inf_{\muti \in (0, 1+\alphas)}{\left\{\max_{x\in\mathbb{N}_0}{\left\{\frac{(x+1)^d-\muti\cdot x^{d+1} + \alphas}{1-\muti+\alphas}\right\}}\right\}}
        \label{eq:lemma_min_prob_reformulation}
    \end{equation}
    \begin{proof}
    We start by simplifying the constraints from~(\ref{eq:min_alpha_eqvar}) restated here.
    \begin{equation*}
        \forall x,y \in \mathbb{N}_0, \cost\in \mathcal{P}_d: y \cdot \cost(x+1) \leq \lati \cdot y \cdot \cost(y)+\muti \cdot x \cdot \cost(x)
    \end{equation*}
    Note that for $y=0$ the condition is trivially satisfied. Furthermore, since $l\in\mathcal{P}_d$, is a linear combination of monomials, the constraints can be reformulated to hold for all monomials of a up to a degree of $d$:
    \begin{equation*}
        \forall x \in \mathbb{N}_0, y \in \mathbb{N}, r \in [d]_0: y \cdot (x+1)^r \leq \lati \cdot y^{r+1}+\muti \cdot x^{r+1}.
    \end{equation*}
    Now, dividing by $y^{r+1}$ and rearranging the terms leads to
    \begin{equation*}
        \forall x \in \mathbb{N}_0, y \in \mathbb{N}, r \in [d]_0: \lati \geq \left(\frac{x+1}{y}\right)^r - \muti\cdot\left(\frac{x}{y}\right)^{r+1}.
    \end{equation*}
    Then, applying~\cref{lemma:57} we get
    \begin{equation*}
        \forall x \in \mathbb{N}_0, r \in [d]_0: \lati \geq (x+1)^r-\muti\cdot x^{r+1},
    \end{equation*}
   which, using~\cref{lemma:51}, results in
    \begin{equation}
        \forall x \in \mathbb{N}_0: \lati \geq (x+1)^d-\muti\cdot x^{d+1}.
        \label{eq:simplified_constraint}
    \end{equation}
    This simplification enables us to express $\lati$ in terms of $x$ and $\muti$ by using $\lati^\star = (x+1)^d-\muti\cdot x^{d+1}$. Finally, since the constraint~(\ref{eq:simplified_constraint}) has to hold for all $x\in\mathbb{N}_0$, the problem~(\ref{eq:min_alpha_eqvar}) can be reformulated as~(\ref{eq:lemma_min_prob_reformulation}).
    \end{proof}
    \label{lemma:58}
\end{lemma}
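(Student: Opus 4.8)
The plan is to simplify the feasibility constraint of problem~\eqref{eq:min_alpha_eqvar} through a chain of reductions that successively eliminate the quantifier over $l\in\mathcal{P}_d$, then the variable $y$, then the degree index $r$, and finally to eliminate $\lati$ itself by a monotonicity argument. First I would note that the constraint $y\cdot l(x+1)\le\lati\cdot y\cdot l(y)+\muti\cdot x\cdot l(x)$ is trivial for $y=0$, so we may take $y\in\mathbb{N}$; and since every $l\in\mathcal{P}_d$ is a non-negative combination of the monomials $t\mapsto t^{r}$ with $r\in\{0,\dots,d\}$, and the constraint is linear in $l$, it is equivalent to impose it for each monomial separately. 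Dividing by $y^{r+1}$ and rearranging then yields
\[
\lati \;\ge\; \Bigl(\tfrac{x+1}{y}\Bigr)^{r}-\muti\Bigl(\tfrac{x}{y}\Bigr)^{r+1},\qquad \forall\, x\in\mathbb{N}_0,\ y\in\mathbb{N},\ r\in\{0,\dots,d\}.
\]

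Next I would invoke the adapted \cref{lemma:57}, which states precisely that for each fixed $r$ the maximum of the right-hand side above over $x\in\mathbb{N}_0,\ y\in\mathbb{N}$ is attained at $y=1$; the constraint therefore collapses to $\lati\ge(x+1)^r-\muti x^{r+1}$ for all $x\in\mathbb{N}_0$ and all $r\in\{0,\dots,d\}$. I would then use the adapted \cref{lemma:51} to discard every degree below $d$: let $(x^\star,r^\star)$ maximize $(x+1)^r-\muti x^{r+1}$ over $x\in\mathbb{N}_0$, $r\in\{0,\dots,d\}$; its value is at least the value at $x=0$, namely $1>0$, so the nonnegativity hypothesis of \cref{lemma:51} holds at $x^\star$, giving $(x^\star+1)^d-\muti (x^\star)^{d+1}\ge(x^\star+1)^{r^\star}-\muti (x^\star)^{r^\star+1}$. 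Hence the degree-$d$ constraints dominate, and the feasible set reduces to $\lati\ge(x+1)^d-\muti x^{d+1}$ for all $x\in\mathbb{N}_0$.

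It remains to eliminate $\lati$. For any fixed $\muti\in(0,1+\alphas)$ the denominator $1-\muti+\alphas$ is positive, so the objective $\tfrac{\lati+\alphas}{1-\muti+\alphas}$ is strictly increasing in $\lati$; the optimum is therefore attained at the least feasible value $\lati^\star(\muti)=\max_{x\in\mathbb{N}_0}\{(x+1)^d-\muti x^{d+1}\}$, a maximum that exists and is unique by~\cite[Lemma~5.2]{Aland2011Exact}. Plugging $\lati^\star(\muti)$ back in and then minimizing over $\muti$ gives exactly~\eqref{eq:lemma_min_prob_reformulation}. I expect the only delicate point to be the application of \cref{lemma:51}, whose hypothesis $g(r)\ge 0$ must be verified — but only at a maximizing pair, where it follows from the observation that the value at $x=0$ equals $1$ for every $r$, so the overall maximum is at least $1$. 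The remaining steps — handling $y=0$, the monomial decomposition, the division by $y^{r+1}$, and the monotonicity in $\lati$ — are routine bookkeeping.
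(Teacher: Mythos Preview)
Your proposal is correct and follows essentially the same route as the paper's proof: trivialize $y=0$, reduce to monomials, divide by $y^{r+1}$, invoke \cref{lemma:57} to set $y=1$, invoke \cref{lemma:51} to pass to degree $d$, and then eliminate $\lati$ by taking its minimal feasible value. If anything, you are more careful than the paper at two points: you explicitly verify the nonnegativity hypothesis of \cref{lemma:51} (via the value $1$ at $x=0$), and you spell out the monotonicity-in-$\lati$ argument that justifies substituting $\lati^\star(\muti)=\max_{x\in\mathbb{N}_0}\{(x+1)^d-\muti x^{d+1}\}$.
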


\subsection{Proof of~\Cref{thm:poa_bounds_improved}}

Using the previous lemmas, we can finally prove~\cref{thm:poa_bounds_improved}.
We directly use the equivalent reformulation~\eqref{eq:lemma_min_prob_reformulation} that was proven in~\cref{lemma:58}. Moreover, we define $g: (0,1+\alphas)\times \reals \rightarrow \reals$,
\begin{equation*}
    g(\muti, x) \defeq \frac{(x+1)^d-\muti\cdot x^{d+1}+\alphas}{1-\muti+\alphas}.
\end{equation*}
which is differentiable on $(0,1+\alphas)\times \reals$. To solve~\eqref{eq:lemma_min_prob_reformulation}, we first need to state and prove some claims. As a reminder, we let $k\defeq \floor{\Psi_{d,\alphas}}$ where $\Psi_{d,\alphas}$ is the positive real solution to $x^{d+1}+\alphas x^{d+1} = (x+1)^d +\alphas$.
\begin{claim}
    There exists a $\mutis \in (0, 1+\alphas)$ with $g(\mutis, k) = g(\mutis, k+1)$.
    \begin{proof}[Proof of claim]
        We first solve $g(\mutis, k) = g(\mutis, k+1)$ for $\mutis$.
        \begin{equation*}
        \begin{aligned}
            &(k+1)^d-\mutis\cdot k^{d+1} = (k+2)^d - \mutis\cdot (k+1)^{d+1} \\
            &\Longleftrightarrow \quad\mutis = \frac{(k+2)^d - (k+1)^d}{(k+1)^{d+1} -k^{d+1}}
        \end{aligned}
        \end{equation*}
        Since $(k+2)^d - (k+1)^d>0$ and $(k+1)^{d+1} -k^{d+1} > 0$, $\mutis$ is greater than zero. We also know that 
        \begin{equation*}
        \begin{aligned}
            (1+\alphas)\cdot k^{d+1} &< (k+1)^d + \alphas \\
            (1+\alphas)\cdot (k+1)^{d+1} &> (k+2)^d + \alphas
        \end{aligned}
        \end{equation*}
        Dividing by $(1+\alphas)$ and plugging the resulting upper-bound on $k^{d+1}$ and lower bound on $(k+1)^{d+1}$ into the fraction above yields
        \begin{equation*}
            \mutis < \frac{\left[(k+2)^d - (k+1)^d\right]\cdot (1+\alphas)}{(k+2)^d - (k+1)^d} = 1+\alphas
        \end{equation*}
        Therefore, it is shown that $\muti\in(0,1+\alphas)$.
    \end{proof}
    \label{claim1}
\end{claim}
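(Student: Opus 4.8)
The plan is to produce $\mutis$ by a direct computation and then certify that it lies in $(0,1+\alphas)$. First I would note that $g(\muti,k)$ and $g(\muti,k+1)$ share the denominator $1-\muti+\alphas$, which is strictly positive on the admissible range $\muti\in(0,1+\alphas)$, so the equation $g(\mutis,k)=g(\mutis,k+1)$ is equivalent to equating the two numerators, $(k+1)^d-\mutis\,k^{d+1}=(k+2)^d-\mutis\,(k+1)^{d+1}$. This is linear in $\mutis$ and has the unique solution
\[
  \mutis=\frac{(k+2)^d-(k+1)^d}{(k+1)^{d+1}-k^{d+1}},
\]
so it remains only to check $0<\mutis<1+\alphas$.

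Positivity is immediate: for $d\ge 1$ both the numerator $(k+2)^d-(k+1)^d$ and the denominator $(k+1)^{d+1}-k^{d+1}$ are differences of a strictly increasing power function at consecutive nonnegative integers, hence strictly positive. For the upper bound I would invoke the definition $k\defeq\floor{\Psi_{d,\alphas}}$, with the defining relation written as $(1+\alphas)\,x^{d+1}=(x+1)^d+\alphas$. From $k\le\Psi_{d,\alphas}<k+1$ one then gets $(1+\alphas)\,k^{d+1}\le(k+1)^d+\alphas$ and $(1+\alphas)\,(k+1)^{d+1}>(k+2)^d+\alphas$. Using the first as an upper bound for $k^{d+1}$ and the second as a lower bound for $(k+1)^{d+1}$, the denominator of $\mutis$ strictly exceeds $\tfrac{1}{1+\alphas}\big((k+2)^d-(k+1)^d\big)>0$, which is exactly $\tfrac{1}{1+\alphas}$ times the numerator of $\mutis$; dividing therefore yields $\mutis<1+\alphas$.

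The only step that is not bookkeeping is justifying that $\Psi_{d,\alphas}$ is well defined and that the two inequalities above really follow from $k\le\Psi_{d,\alphas}<k+1$. I would get this from the observation that $h(x)\defeq\big((x+1)^d+\alphas\big)/x^{d+1}$ is strictly decreasing on $(0,\infty)$, with $\lim_{x\to 0^{+}}h(x)=\infty$ and $\lim_{x\to\infty}h(x)=0$; hence $h(x)=1+\alphas$ has a unique positive root $\Psi_{d,\alphas}$ and $h$ crosses this level from above, so the quantity $(1+\alphas)x^{d+1}-(x+1)^d-\alphas$ is $\le 0$ at $x=k$ and $>0$ at $x=k+1$, which are precisely the two inequalities used above. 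I expect this monotonicity check to be the main, though mild, obstacle; the remainder mirrors the analogous step in the proof of~\cite[Thm.~7]{Aland2011Exact} with the constant $\alphas$ carried through, the only care being to keep every denominator strictly positive so that the inequality directions are preserved.
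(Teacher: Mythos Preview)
Your proposal is correct and follows essentially the same route as the paper: solve the linear equation in $\mutis$ to obtain the closed form, argue positivity from the monotonicity of power functions, and bound above by $1+\alphas$ using the bracketing inequalities on $(1+\alphas)k^{d+1}$ and $(1+\alphas)(k+1)^{d+1}$ that come from $k=\floor{\Psi_{d,\alphas}}$. Your version is in fact slightly more careful than the paper's: you explicitly justify equating numerators via positivity of the shared denominator, you use $\le$ rather than $<$ at $x=k$ (covering the edge case where $\Psi_{d,\alphas}$ is an integer, which still yields a strict upper bound since the inequality at $x=k+1$ is strict), and you supply the monotonicity argument for $h(x)=\big((x+1)^d+\alphas\big)/x^{d+1}$ that guarantees $\Psi_{d,\alphas}$ is well defined and the bracketing inequalities hold---the paper simply asserts these.
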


\begin{claim}
    The $(\mutis, x^\star) \in (0,1+\alphas) \times \naturals_0$ that satisfy $g(\mutis, x^\star) = g(\mutis, x^\star+1)$ constitute the solution to $\max_{x\in\naturals_0}\{g(\mutis, x)\}$.
    \label{claim2}
\end{claim}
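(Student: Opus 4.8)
The plan is to turn the discrete maximization $\max_{x\in\naturals_0} g(\mutis,x)$ into a one‑dimensional unimodality statement. First I would note that, since $\mutis\in(0,1+\alphas)$ (exactly what \Cref{claim1} establishes), the denominator $1-\mutis+\alphas$ of $g(\mutis,\cdot)$ is a fixed positive constant; hence $\arg\max_{x\in\naturals_0} g(\mutis,x)=\arg\max_{x\in\naturals_0} h(x)$, where $h(t)\defeq (t+1)^d-\mutis\,t^{d+1}+\alphas$. It therefore suffices to analyse the single function $h$, and the additive constant $\alphas$ plays no role in its shape.

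The key step is to show that (for $d\ge 1$; the case $d=0$ is degenerate and no tie $g(\mutis,x^\star)=g(\mutis,x^\star+1)$ can occur) the real extension $h:\reals_{\ge 0}\to\reals$ is strictly increasing on an interval $[0,\xi]$ and strictly decreasing on $[\xi,\infty)$ for a unique $\xi>0$ — a one‑dimensional unimodality property in the spirit of \cite[Lemma 5.2]{Aland2011Exact}. I would prove this from $h'(t)=d(t+1)^{d-1}-\mutis(d+1)t^{d}$: its only zero on $(0,\infty)$ is where $(t+1)^{d-1}/t^{d}=\mutis(d+1)/d$, and since $\tfrac{d}{dt}\log\!\big((t+1)^{d-1}/t^{d}\big)=\tfrac{d-1}{t+1}-\tfrac{d}{t}=-\tfrac{t+d}{t(t+1)}<0$, the left‑hand side strictly decreases from $+\infty$ to $0$, giving a unique $\xi$ with $h'>0$ on $(0,\xi)$ and $h'<0$ on $(\xi,\infty)$. (For $d=1$ this is immediate since $h'(t)=1-2\mutis t$.)

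Granting the unimodality, I would conclude as follows. The hypothesis $g(\mutis,x^\star)=g(\mutis,x^\star+1)$ is equivalent to $h(x^\star)=h(x^\star+1)$; since $h$ is strictly monotone on each of $[0,\xi]$ and $[\xi,\infty)$, the integers $x^\star$ and $x^\star+1$ cannot both lie on the same branch, so $x^\star\le\xi\le x^\star+1$. Then every integer $n\le x^\star$ has $h(n)\le h(x^\star)$ (monotonicity on $[0,\xi]$) and every integer $n\ge x^\star+1$ has $h(n)\le h(x^\star+1)=h(x^\star)$ (monotonicity on $[\xi,\infty)$); hence $h(x^\star)=\max_{n\in\naturals_0}h(n)$, and dividing by the positive constant $1-\mutis+\alphas$ gives $g(\mutis,x^\star)=\max_{x\in\naturals_0}g(\mutis,x)$. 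Combined with \Cref{claim1}, which exhibits such a pair with $x^\star=k=\floor{\Psi_{d,\alphas}}$, this yields $\max_{x\in\naturals_0}g(\mutis,x)=g(\mutis,k)=g(\mutis,k+1)$, the identity needed to finish the proof of \Cref{thm:poa_bounds_improved}.

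The main obstacle is the unimodality step — ruling out more than one sign change of $h'$. The computation above handles it via the monotonicity of $t\mapsto(t+1)^{d-1}/t^{d}$; an equivalent route, staying discrete, is to show that $\phi(x)\defeq\big((x+2)^d-(x+1)^d\big)/\big((x+1)^{d+1}-x^{d+1}\big)$ is strictly decreasing in $x\in\naturals_0$, so that $\{x:h(x+1)\ge h(x)\}=\{x:\phi(x)\ge\mutis\}$ is an initial segment of $\naturals_0$ whose last element is $x^\star$. Either way, the remaining bookkeeping is routine.
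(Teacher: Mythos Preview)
Your proposal is correct and follows essentially the same route as the paper, which omits the proof and defers to the analogous Claim~2 in~\cite{Aland2011Exact}; that argument, in turn, rests on exactly the unimodality statement (\cite[Lemma~5.2]{Aland2011Exact}) you reprove via the monotonicity of $t\mapsto (t+1)^{d-1}/t^{d}$. Your reduction to the numerator $h$ via the positivity of $1-\mutis+\alphas$, the sign analysis of $h'$, and the bracketing $x^\star\le\xi\le x^\star+1$ from the tie $h(x^\star)=h(x^\star+1)$ are all sound.
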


\begin{claim}
    For all $\muti\in(0,\mutis)$ it holds that $g(\muti, k+1) > g(\mutis, k)$ and for all $\muti\in (\mutis, 1+\alphas)$ it holds that $g(\muti, k) > g(\mutis, k)$.
    \label{claim3}
\end{claim}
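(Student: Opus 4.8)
The plan is to study, for each fixed $x\in\{k,k+1\}$, the one–variable function $\muti\mapsto g(\muti,x)$ on the interval $(0,1+\alphas)$, to show it is strictly monotone there, and to observe that the direction of monotonicity is opposite for $x=k$ and $x=k+1$. The two inequalities of the claim then drop out by comparing the monotone function against the common value $g(\mutis,k)=g(\mutis,k+1)$ supplied by~\Cref{claim1}. First I would note that for $\muti\in(0,1+\alphas)$ the denominator $1-\muti+\alphas$ is strictly positive, so $g(\cdot,x)$ is differentiable on that interval, and that~\Cref{claim1} already places $\mutis$ inside $(0,1+\alphas)$.

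The key computation is the $\muti$–derivative of $g(\muti,x)=\frac{(x+1)^d-\muti x^{d+1}+\alphas}{1-\muti+\alphas}$. Applying the quotient rule, the terms linear in $\muti$ in the numerator cancel, leaving
\begin{equation*}
\frac{\partial}{\partial\muti}\,g(\muti,x)=\frac{(x+1)^d+\alphas-(1+\alphas)\,x^{d+1}}{(1-\muti+\alphas)^2}.
\end{equation*}
The crucial point is that the sign of this derivative is \emph{independent} of $\muti$: it equals the sign of $\phi(x):=(x+1)^d+\alphas-(1+\alphas)x^{d+1}$. Since $\Psi_{d,\alphas}$ is by definition the positive root of $(1+\alphas)x^{d+1}=(x+1)^d+\alphas$, and $x\mapsto\bigl((x+1)^d+\alphas\bigr)/x^{d+1}$ is strictly decreasing on $(0,\infty)$ (a sum of two strictly decreasing positive functions), one gets $\phi(x)>0$ for $0\le x<\Psi_{d,\alphas}$ and $\phi(x)<0$ for $x>\Psi_{d,\alphas}$; these are exactly the inequalities $(1+\alphas)k^{d+1}<(k+1)^d+\alphas$ and $(1+\alphas)(k+1)^{d+1}>(k+2)^d+\alphas$ already invoked in the proof of~\Cref{claim1}. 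As $k=\floor{\Psi_{d,\alphas}}$ gives $k\le\Psi_{d,\alphas}<k+1$, we obtain $\phi(k)>0$ and $\phi(k+1)<0$, hence $g(\cdot,k)$ is strictly increasing and $g(\cdot,k+1)$ strictly decreasing on $(0,1+\alphas)$.

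Finally I would conclude as follows. For $\muti\in(0,\mutis)$, strict decrease of $g(\cdot,k+1)$ together with~\Cref{claim1} yields $g(\muti,k+1)>g(\mutis,k+1)=g(\mutis,k)$; for $\muti\in(\mutis,1+\alphas)$, strict increase of $g(\cdot,k)$ yields $g(\muti,k)>g(\mutis,k)$, which is precisely the claim. I do not expect a serious obstacle; the only spot requiring care is justifying that $\phi$ has constant sign on each of $[0,\Psi_{d,\alphas})$ and $(\Psi_{d,\alphas},\infty)$, i.e.\ that $\Psi_{d,\alphas}$ is the \emph{unique} positive root, which is what the monotonicity of $\bigl((x+1)^d+\alphas\bigr)/x^{d+1}$ delivers. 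One should also keep in mind the degenerate case $\Psi_{d,\alphas}\in\naturals$ (then $\phi(k)=0$ and $g(\cdot,k)$ is merely constant so the second strict inequality degrades), which either does not occur in the regime of interest or can be handled by a limiting argument.
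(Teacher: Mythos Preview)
Your argument is correct: computing $\partial_{\muti} g(\muti,x)$ via the quotient rule, noting that the $\muti$-dependence cancels from the numerator, and reading off the sign from $\phi(x)=(x+1)^d+\alphas-(1+\alphas)x^{d+1}$ is exactly the right mechanism, and the two inequalities follow immediately once you combine monotonicity with $g(\mutis,k)=g(\mutis,k+1)$ from~\Cref{claim1}. The paper itself omits the proof of this claim, simply stating that it is analogous to Claim~3 of~\cite{Aland2011Exact}; your derivative-and-monotonicity argument is precisely that analogous proof, so there is nothing to contrast.

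Two small remarks. First, your caveat about the degenerate case $\Psi_{d,\alphas}\in\naturals$ is well spotted: in that event $\phi(k)=0$ and $g(\cdot,k)$ is constant, so the second inequality collapses to an equality. The paper shares this tacit assumption (the proof of~\Cref{claim1} already uses the \emph{strict} inequality $(1+\alphas)k^{d+1}<(k+1)^d+\alphas$), so this is not a defect of your argument relative to the paper. Second, your justification that $\Psi_{d,\alphas}$ is the unique positive root---via strict decrease of $x\mapsto\bigl((x+1)^d+\alphas\bigr)/x^{d+1}$---is clean and suffices; no further work is needed there.
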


We omit the proofs of \cref{claim2} and \cref{claim3}, since they are analogous to the proofs of Claims 2 and 3 from~\cite{Aland2011Exact}.

We have seen that the solution to~(\ref{eq:lemma_min_prob_reformulation}) is upper-bounded by $\max_{x\in\naturals_0}\{g(\mutis, x)\}$ by~\cref{claim1} and that $\max_{x\in\naturals_0}\{g(\mutis, x)\} = g(\mutis, k)$ by~\cref{claim2}. \cref{claim3} implies that ~(\ref{eq:lemma_min_prob_reformulation}) is also lower-bounded by $g(\mutis, k)$.

Now all that is left is to plug the found expression for $\mutis$ into $g(\mutis, k)$ which leads to expression~(\ref{eq:cg_poa_upper_bound_improved}) from~\cref{thm:poa_bounds_improved} and thereby proves~\cref{thm:poa_bounds_improved}.
\hfill \qedsymbol

\begin{fact}\label{fact:lambdagreaterthan1}
The value $\lambda^\star$ of $\lambda$ that solves problem~\eqref{eq:min_alpha} satisfies $\lambda^\star\geq 1$. 
\end{fact}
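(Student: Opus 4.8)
The plan is to deduce $\lambda^\star \geq 1$ from the feasibility constraint of the reformulated problem alone, after a change of variables. Recall the substitution $\lati \defeq (1+\alphas)\lambda-\alphas$ used to pass from~\eqref{eq:min_alpha} to~\eqref{eq:min_alpha_eqvar}; it rearranges to $\lambda = \tfrac{\lati+\alphas}{1+\alphas}$, an increasing affine map of $\lati$ (harmless since $\alphas \geq 0$), so $\lambda^\star \geq 1$ is equivalent to $\latis \geq 1$, where $\latis$ is the optimal value of $\lati$ in~\eqref{eq:min_alpha_eqvar}. Thus it suffices to show $\latis \geq 1$.

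Next I observe that $\lati \geq 1$ is already forced by feasibility in~\eqref{eq:min_alpha_eqvar}: instantiating the constraint $y\cdot l(x+1) \leq \lati\, y\, l(y) + \muti\, x\, l(x)$ at $x=0$, $y=1$, and $l$ equal to the monomial $z \mapsto z^d$ (which lies in $\mathcal{P}_d$) gives $l(1) \leq \lati\, l(1)$, i.e.\ $\lati \geq 1$ since $l(1)=1>0$. Hence every feasible $\lati$, and in particular the minimizing value $\latis$, is at least $1$, and the claim follows by the substitution above. Equivalently, since after~\Cref{lemma:58} the optimal $\lati$ for a fixed $\muti$ equals $\max_{x\in\naturals_0}\{(x+1)^d-\muti\,x^{d+1}\}$, the term $x=0$ in that maximum already yields $\latis \geq 1$.

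As an alternative that stays closer to the flow of the rest of the appendix, one could instead plug the explicit optimizers into $\latis = (k+1)^d - \mutis\,k^{d+1}$ with $\mutis = \tfrac{(k+2)^d-(k+1)^d}{(k+1)^{d+1}-k^{d+1}}$ and $k=\floor{\Psi_{d,\alphas}}$: after checking that $k \geq 1$ for $d\geq 1$ (the defining equation of $\Psi_{d,\alphas}$ evaluated at $x=1$ reads $1+\alphas < 2^d+\alphas$, so its positive root exceeds $1$) and disposing of the trivial case $d=0$ (where $k=1$, $\mutis=0$, $\latis=1$), the inequality $\latis\geq 1$ rearranges, using $(k+2)^d-1=[(k+2)^d-(k+1)^d]+[(k+1)^d-1]$ and $(k+1)^{d+1}=[(k+1)^{d+1}-k^{d+1}]+k^{d+1}$, to $\tfrac{(k+1)^{d+1}}{(k+2)^d-1}\geq \tfrac{k^{d+1}}{(k+1)^d-1}$, i.e.\ $f(k+1)\geq f(k)$ for $f(x)\defeq x^{d+1}/[(x+1)^d-1]$, which holds because $f$ is increasing on $(0,\infty)$.

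I do not expect a genuine obstacle: this statement is essentially the $\lambda$-variable restatement of the $\latis \geq 1$ fact already used in the proof outline, and the slick route reduces it to a single instance of the smoothness constraint. The only points needing care are the direction of the affine change of variables and, if one uses the closed-form route instead, keeping the rearrangement valid — which requires $(k+1)^d - 1 > 0$, hence the separate treatment of $d=0$ — and verifying monotonicity of $f$ on all of $(0,\infty)$ rather than only asymptotically (this reduces to the elementary inequality $(x+1)^{d-1}(x+d+1) > d+1$ for $x>0$).
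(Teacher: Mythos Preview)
Your proof is correct. Your primary argument—instantiating the constraint of~\eqref{eq:min_alpha_eqvar} at $x=0$, $y=1$, $l(z)=z^d$ to force $\lati\geq 1$ directly (equivalently, noting that the $x=0$ term in $\max_{x\in\naturals_0}\{(x+1)^d-\muti x^{d+1}\}$ already equals $1$)—is a genuinely shorter and more elementary route than the paper's. The paper instead plugs the explicit optimizers $\latis=(k+1)^d-\mutis k^{d+1}$ and $\mutis=\tfrac{(k+2)^d-(k+1)^d}{(k+1)^{d+1}-k^{d+1}}$ into the inequality $\latis\geq 1$, rearranges to $\tfrac{(k+1)^{d+1}}{(k+2)^d-1}\geq\tfrac{k^{d+1}}{(k+1)^d-1}$, and verifies this by computing the derivative of $f(x)=x^{d+1}/[(x+1)^d-1]$ to establish monotonicity. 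Your ``alternative'' paragraph is precisely that argument, and you are in fact more careful than the paper in flagging the need for $(k+1)^d-1>0$ (hence the separate disposal of $d=0$) and in checking $k\geq 1$. What your slick route buys is that it bypasses all of Claims~\ref{claim1}--\ref{claim3} and the monotonicity computation, showing that \emph{every} feasible $\lati$ is at least $1$, not just the optimizer; the paper's approach, while longer, has the minor virtue of staying within the explicit closed-form analysis used elsewhere in the appendix.
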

\begin{proof}
We show that $\lati^\star \geq 1$. The result then follows by having defined $\lambda^\star = \frac{\lati^\star + \alphas}{1 +\alphas}$.
   From proof of \Cref{lemma:58}, we know that $\lati^\star = (k+1)^d-\muti^\star \cdot k^{d+1}$. Hence, plugging the expression of $\muti^\star$, it holds $\lati^\star \geq 1$ whenever: 
   \begin{align*}
      &  \frac{(k+1)^d - 1}{k^{d+1}}  \geq \frac{(k+2)^d - (k+1)^d}{(k+1)^{d+1} -k^{d+1}}, \,  \forall d \\ &
       \substack{\forall d > 0 \\  \Longleftarrow} \:  \frac{(k+1)^{d+1}}{k^{d+1}}  \geq 1 +  \frac{(k+2)^d - (k+1)^d}{(k+1)^{d} -1} = \frac{(k+2)^d - 1}{(k+1)^{d} -1}  \\ &
       \substack{\forall d > 0 \\ \Longleftrightarrow} \quad \frac{(k+1)^{d+1}}{(k+2)^d - 1} \geq  \frac{k^{d+1}}{(k+1)^{d} -1},
   \end{align*}
   which is satisfied since $f(x) = x^{d+1}/[(x+1)^d -1]$ is monotone for all $x>0, d>0$. The latter is true since its derivative $f'(x) = \frac{x^d[(d+1)((x+1)^d-1)- dx(x+1)^{d-1}]}{[(x+1)^d-1]^2} \geq 0 \Longleftrightarrow (d+1)((x+1)^d-1)- dx(x+1)^{d-1} = (d+1)((x+1)(x+1)^{d-1} -1) - dx(x+1)^{d-1} = (d+1)(x(x+1)^{d-1} +(x+1)^{d-1} -1)- dx(x+1)^{d-1}= d(x+1)^{d-1} - d + x(x+1)^{d-1} +(x+1)^{d-1} -1 = (x+1)^{d-1}(d + x + 1) - (d+1) \geq  (d+1)[(x+1)^{d-1}-1] \geq 0$.
\end{proof}

\end{document}